\newcommand{\argmin}{\mathop{\arg \min}}
\newcommand{\argmax}{\mathop{\arg \max}}
\newtheorem{theorem}{Theorem}[section]
\newtheorem{lemma}[theorem]{Lemma}
\newtheorem{assumption}{Assumption}
\newcommand{\ours}{\textit{SGPT}}
\newcommand{\select}{\texttt{Select}\xspace}
\newcommand{\cut}[1]{\textcolor{red}{}}
\newcommand{\Yc}{\mathcal{Y}}
\newcommand{\Dc}{\mathcal{D}}
\newcommand{\Xc}{\mathcal{X}}
\newcommand{\Cc}{\mathcal{C}}
\newcommand{\Hc}{\mathcal{H}}
\newcommand{\beq}{\begin{equation}}
\newcommand{\eeq}{\end{equation}}
\newcommand{\bea}{\begin{align}}
\newcommand{\eea}{\end{align}}
\newcommand{\nn}{\notag}
\DeclarePairedDelimiterX{\inp}[2]{\langle}{\rangle}{#1, #2}
\definecolor{cvprblue}{rgb}{0.21,0.49,0.74}
\title{Unlocking the Potential of Prompt-Tuning in Bridging Generalized and Personalized Federated Learning}
\author{Wenlong Deng, Christos Thrampoulidis$^{*}$, Xiaoxiao Li\thanks{Equal Corresponding Author.}  \\
Department of Electrical and Computer Engineering,\\ The University of British Columbia, Vancouver, BC, Canada \\
\texttt{\{dwenlong,cthrampo,xiaoxiao.li\}@ece.ubc.ca}
}
\begin{document}
\maketitle

\begin{abstract}
Vision Transformers (ViT) and Visual Prompt Tuning (VPT) achieve state-of-the-art performance with improved efficiency in various computer vision tasks. This suggests a promising paradigm shift of adapting pre-trained ViT models to Federated Learning (FL) settings. However, the challenge of data heterogeneity among FL clients presents a significant hurdle in effectively deploying ViT models. Existing Generalized FL (GFL) and Personalized FL (PFL) methods have limitations in balancing performance across both global and local data distributions. In this paper, we present a novel algorithm, \ours{}, that integrates GFL and PFL approaches by employing a unique combination of both shared and group-specific prompts. This design enables \ours{} to capture both common and group-specific features. A key feature of \ours{} is its prompt selection module, which facilitates the training of a single global model capable of automatically adapting to diverse local client data distributions without the need for local fine-tuning. To effectively train the prompts, we utilize block coordinate descent (BCD), learning from common feature information (shared prompts), and then more specialized knowledge (group prompts) iteratively. Theoretically, we justify that learning the proposed prompts can reduce the gap between global and local performance. Empirically, we conduct experiments on both label and feature heterogeneity settings in comparison with state-of-the-art baselines, along with extensive ablation studies, to substantiate the superior performance of \ours{}.

\end{abstract}    
\section{Introduction}
\label{sec:intro}
FL is a framework that allows machine learning models to be learned from multiple clients without sharing their data~\cite{mcmahan2017communication}. In the landscape of computer vision, the integration of ViT~\cite{dosovitskiy2020image} with FL emerges as a pivotal research domain as it promises a significant improvement in image recognition tasks. Firstly, ViT's attention mechanism has demonstrated exceptional ability in deriving robust and discriminative representations~\cite{dosovitskiy2020image,bommasani2021opportunities,vaswani2017attention,bhojanapalli2021understanding} in terms of scalability and adaptability to a variety data scenarios; a feature crucial in FL environments. Secondly, ViTs demonstrate a remarkable ability to generalize from limited data by leveraging powerful publicly available pre-trained ViT models~\cite{dosovitskiy2020image,he2022masked,jia2022visual,khan2022transformers} as initialization, making them inherently suitable for FL’s decentralized nature. 
\begin{figure}[t!]
    \centering
    \includegraphics[width=\linewidth]{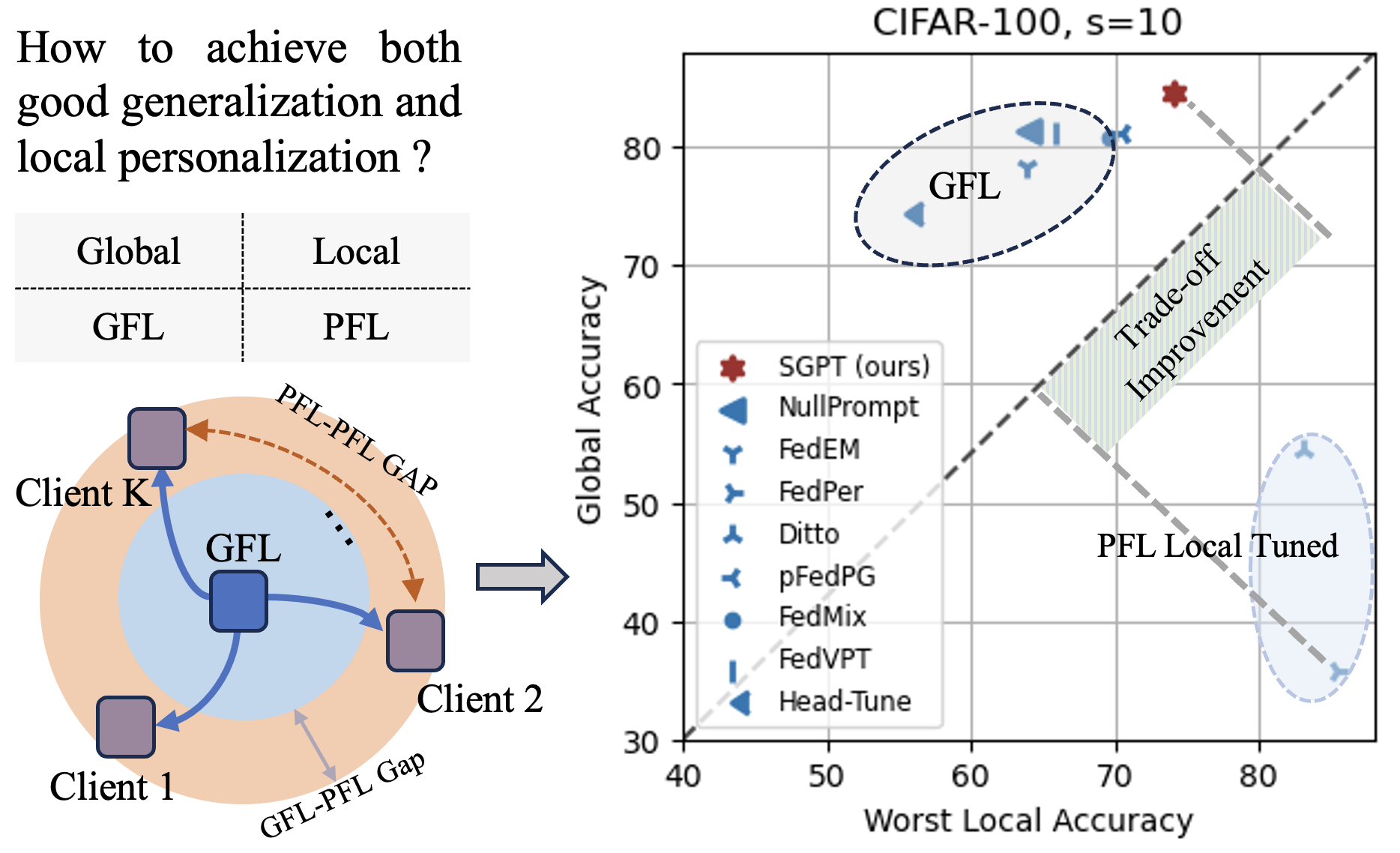}
    \caption{Global accuracy and worst local accuracy on CIFAR-100 with $s=10$ ($s$ is the number of classes per client). Points located in the top-right corner correspond to great performance on both the global data and local clients’ data distributions. PFL models perform well on local data, however, lack the ability to predict out-of-client data. Global models have a better generalization but cannot well adapt to each local data distribution. Our proposed \ours{} ($\star$) achieves the best trade-off.
 }\label{fig:100pareto}
 \vspace{-3mm}
\end{figure}
While ViTs are often seen as computationally demanding, recent advancements have significantly enhanced their efficiency casting them suitable for FL\footnote{A detailed related work review on parameter efficient tuning for transformer is provided in the appendix.}. These improvements include 1) updating part of model parameters~\cite{sun2022exploring} or 2) optimizing additional parameters with a frozen model~\cite{yi2023fedlora,li2022fedtp}, during local training of FL followed by federated averaging~\cite{mcmahan2017communication}. In this paper, we focus on the latter approach by employing Visual Prompt Tuning (VPT)~\cite{jia2022visual} given its efficiency and effectiveness in vision tasks~\cite{jia2022visual,sun2022exploring,yang2023efficient}.

Although prompt tuning techniques allow efficient FL, applying them to certain FL scenarios still remains an open research challenge~\cite{yang2023efficient}, particularly when data across clients exhibits heavy heterogeneity in terms of domain discrepancy~\cite{li2021fedbn} or imbalanced class distribution~\cite{li2021model}. To navigate \textit{data heterogeneity}, two primary strategies are used: Generalized FL (GFL) and Personalized FL (PFL). On the one hand, GFL approaches focus on learning a single global FL model that achieves high generalization, with methods like FedAvg variants~\cite{karimireddy2020scaffold,li2021fedbn,li2020federated}. On the other hand, PFL tailors models to individual clients and clustered client groups. For instance, some PFL methods~\cite{fallah2020personalized,wu2022motley} involve local data fine-tuning to customize a global model for personalized models. Others~\cite{ghosh2020efficient, mansour2020three, caldarola2021cluster} take into account client similarity and integrate clients with similar data distributions into clusters. In the regime of FL with ViT, FedPR~\cite{feng2023learning}, a recent prompt-tuning-based GFL method, learns client prompts and aggregates them into global prompts. Recently, FedPG~\cite{yang2023efficient}, another PFL method, uses a Hyper-Network~\cite{shamsian2021personalized} to generate client-specific prompts. 
Both GFL and PFL methods have their own limitations: (1) GFL methods are insufficient when dealing with significant data heterogeneity~\cite{li2020federated} with one global model; (2) PFL customizes client models, which can lead to overfitting on local data~\cite{wu2022motley}. Overall, this limits their ability to generalize to other distributions and may not allow them to adapt to out-of-federation clients.\footnote{A detailed related work review on FL is provided in the appendix.}

To overcome the limitations of GFL and PFL methods under data heterogeneity, it is essential to leverage their respective strengths through a combination of the two. We show that this is possible by appropriately leveraging prompt-tuning. 
Concretely, we achieve this by developing a new FL algorithm \underline{S}hared 
and \underline{G}roup \underline{P}rompt \underline{T}uning (dubbed \ours{}). Our algorithm focuses on learning a shared global model during training, allowing it to acquire global information, while also enabling local adaptation with prompt selection. This approach leads to high accuracy in generalizing to global distribution as well as efficient alignment to various local client data distributions (see Fig.~\ref{fig:100pareto}). To elaborate, firstly, \ours{} globally learns \textbf{shared} and \textbf{group} prompts, which facilitate the learning of both universal and group-specific knowledge. Secondly, the prompt selection module (see Fig.\ref{fig:main} (b)) effectively finds data groups and assigns group prompts to each input, thus \textit{automatically aligning the global model with local distributions} (see Fig.\ref{fig:main} (c)), without needing local fine-tuning. Thirdly, we use block-coordinate-descent (BCD) for effective parameter training, starting with learning common features (shared prompts) before optimizing group prompts for specialized knowledge iteratively. 
Finally, we present a theoretical error bound and identify two factors \textit{generalization} and \textit{distribution discrepancy} that affect \textit{the gap between the global and local performance}. Our \ours{} effectively considered these two terms. 
In summary, our contributions are as follows.
\begin{itemize}
\item We introduce \ours{}, a novel approach that employs shared prompts to capture common information and utilizes group prompts to effectively align the global model with local distributions via a group selection module without local fine-tuning.  
\item We introduce a BCD optimization routine that iterates between learning the easy and common knowledge (through shared prompts) and then the more complex and specific knowledge (through group prompts). This way, we tackle optimization-specific challenges in the algorithm implementation.
\item We theoretically bound the gap between global and local performance and identify two key factors: generalization and distribution discrepancy. \ours{} can affect both these two factors, thus tightening the bound.
\item  We empirically test our algorithm on a wide range of datasets and types of heterogeneities. The results are compelling, demonstrating that \ours{} consistently surpasses all baseline models.
\end{itemize}

\begin{figure*}[t] 
\centering
\includegraphics[width=\linewidth]{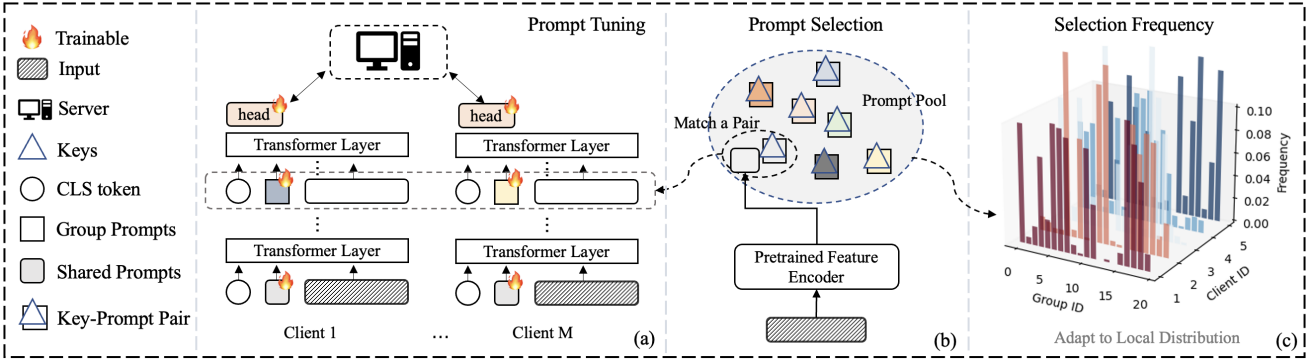} 
\caption{Pipleline of our method. 
(a) Provides an overview of the federated group-aware prompt-tuning \ours{} procedure. Each model comprises shared prompts and group prompts, facilitating the acquisition of both common and group-specific knowledge. The shared prompts and classification head are globally trained, while the group prompt is inserted into intermediate layers trained within its respective data group and shared globally. (b) Depicts the prompt selection module. Here, each input undergoes processing by a pre-trained ViT model encoder. Similarities between keys and last layer CLS token features are calculated, and the prompt corresponding to the most similar key is selected for training, enabling group-aware training at the sample level. (c) Given that data distributions vary across clients, the frequencies of selected group prompts differ, ensuring our model aligns with various local data distributions.}
\label{fig:main}
\vspace{-4mm}
\end{figure*}

\section{Problem Setting and Preliminary}\label{sec:setting}
\subsection{Problem Setting}\label{sec:assump}
In this paper, we examine a scenario involving 
$M$ clients and a central server. The clients' data distributions are heterogeneous, characterized by either domain discrepancies or imbalanced class distributions. We denote the distribution for client $i$ as $\mathcal{D}_i$ with $i \in [M]~\footnote{[m]=\{0, 1, \dots, m\}}$. Each client $i$ contains $N_i$ data samples $\{(x_j^i,y_j^i)\}_{j=1}^{N_i}$. Further, let the parameter of a pre-trained ViT model be \textcolor{blue}{$\theta$} that are \textcolor{blue}{frozen} during training. Denote \textcolor{red}{trainable} prompts as \textcolor{red}{$P$}, and classifier weights as \textcolor{red}{$W_{C}$}. We introduce the objective function of prompt-tuning the task model in FL:
\begin{align}\label{eq:obj1}
    \argmin_{\textcolor{red}{P},\textcolor{red}{W_{C}}} \sum^M_{i=1}\frac{N_i}{N}\sum^{N_i}_{j=1}l(\textcolor{blue}{\theta},\textcolor{red}{P},\textcolor{red}{W_{C}};x^i_j,y^i_j).
\end{align}
where $l: \mathcal{X} \times \mathcal{Y} \rightarrow \mathbb{R}^{+}$ is the cross-entropy loss and $N$ is the total number of data on all clients. In this way, we can leverage the representation power of ViT while enabling efficient tuning by only learning the prompts and classifier.


\subsection{Visual Prompt Tuning (VPT)}\label{sec:vpt_f}
Prompt Tuning is an efficient alternative to full fine-tuning for large-scale Transformer models. VPT~\cite{jia2022visual} introduces only a small amount (less than 1\% of model parameters) of trainable parameters in the input space while keeping the backbone model frozen. Depending on the number of Transformer layers involved, ~\cite{jia2022visual} proposed VPT and VPT-D for efficient fine-tuning. To be specific, for VPT and VPT-D, prompts are inserted into the first Transformer layer and all layers respectively. Take VPT as an example, the prompt token is a learnable $d$-dimensional vector. The learnable prompt $P$ is trained as follows:
\begin{align*}
\left[cls_1, Z_1, E_1\right]  & =\textcolor{blue}{L_1}\left(\left[\textcolor{blue}{cls_0}, \textcolor{red}{P}, E_0\right]\right) \\
\left[cls_u, Z_u, E_u\right]  & =\textcolor{blue}{L_u}\left(\left[cls_{u-1}, Z_{u-1}, E_{u-1}\right]\right) ~ u=2,\ldots, U \\
y & =\textcolor{red}{\operatorname{Head}}\left(x_U\right),
\end{align*}
where $U$ is the number of layers; $Z_u$ represents the prompt features computed by the $u$-th Transformer layer $L_{u}$; $E_u$ is a collection of image patch embeddings as the inputs to the $(u+1)$-th Transformer layer $L_{u+1}$; and $cls_u \in \mathbb{R}^d $ denotes the classification embeddings at $L_{u+1}$'s input space.

\section{Method}\label{sec:method}

\subsection{Architecture}\label{sec:arc}
In this section, we follow VPT~\cite{jia2022visual} to utilize a pre-trained ViT and adapt it to new tasks by prompt-tuning. We further advanced the architecture so that it can align to different data distributions automatically. Specifically, our approach involves learning shared prompts for common features while employing specialized group prompts through a prompt selection module to align with various local client data distributions effectively. Following VPT (shown in Section~\ref{sec:vpt_f}), we initialize with a pre-trained ViT model comprising $U$ layers but learn shared and group prompts (see Fig.~\ref{fig:main} (a)). First, we define the selection function $\select: \mathcal{X} \rightarrow [G]$ (detailed in Section.~\ref{sec:orthokey}) that efficiently groups data into $G$ groups. Then we introduce the shared and group prompts:

\noindent \textbf{Shared Prompts:}  The shared prompts $P_S$ are designed to capture common representations. Recent studies~\cite{ostapenko2022continual,hayes2020remind} have shown low-level representations can be shared across groups, and distilling commonly used information into shared prompts can enhance the model's generalization. Motivated by the observation that features from different classes, processed by early layers of a pre-trained ViT, are uniformly distributed on the manifold (shown in Fig.~\ref{fig:early} in Appendix), indicating shared information across classes. Therefore, we attach shared prompts $P_S$ to the embedding features of the early layers, \ie the first layer:
\begin{align}
\left[cls_1,Z^S_1,E_1\right]  & =\textcolor{blue}{L_1}\left(\left[\textcolor{blue}{cls_0}, \textcolor{red}{P_S}, E_0\right]\right).
\end{align}
\noindent \textbf{Group Prompts:} 
 The group prompts set $P_G = \{p_1, \dots, p_G \}$ containing $G$ group prompt $ p_g, g \in [G]$, which is designed to extract specialized information. In contrast to the early layers, the diverse and specialized features have shown to be preserved in higher layers~\cite{raghu2021vision}. Thus, we use the $\select$ function to assign group membership $g = \select(x)$ to a sample $x$ (detailed in Eq.~\ref{eq:select}) and insert corresponding group prompts $p_g \in P_G$ to higher layers (\ie the $u$-th layer) to extract task-specific features \cite{raghu2021vision}: 
\begin{align}
\left[cls_u,z_u^g,Z_u^S,E_u\right]  & \! = \! \textcolor{blue}{L_u}\left(\left[\textcolor{blue}{cls_{u-1}},\textcolor{red}{p_{g}},Z_{u-1}^S, E_{u-1}\right]\right),
\end{align}
where $Z^{S}_u$ represents the shared prompt features, $z_u^g$ represents the group prompt feature computed by the $u$-th Transformer layer $L_{u}$. At last, we rewrite the objective function Eq.~\eqref{eq:obj1} into:
\begin{align}\label{eq:pravg}
    \argmin_{P_G, P_S, W_C} \! \sum^M_{i=1} \! \frac{N_i}{N} \sum^{N_i}_{j=1}l(\textcolor{blue}{\theta}, \textcolor{red}{p_{g} ,P_S,W_{C}};x^i_j,y^i_j),
\end{align}
where $p_{g} \in P_G$ are the selected group prompts that aligned with the predicted group membership of input $x$ from the function $\select$. 

\subsection{Learning Prompt Selection Function }\label{sec:orthokey}
In this section, we introduce the prompt selection module $\select$ that learns data-specific keys for picking group prompts.
These keys are trained to capture the similarity of feature representations within their respective groups. 
Here, we propose a simple yet effective similarity-based clustering approach: we learn a key $k_g$ for each group $g \in [G]$ as a centroid~\cite{wang2022learning} and cluster data to its nearest centroid. Specifically, we first process the input sample $x$ using the pre-trained model $h_{\theta}$ to obtain its generalized feature representation~\cite{tripuraneni2020theory}. Subsequently, the keys $K = 
\{k_1,...,k_{G}\}$ then cluster feature representations into groups based on cosine similarity:
\begin{align} \label{eq:select}
\select(x) = \argmax_{g\in [G]} cos(h_{\theta}(x),k_{g}).
\end{align}
 However, training keys $K$ in the FL setting faces several challenges: 1) collapse, leading to data clustering in few groups~\cite{mahon2023hard} and 2) instability due to heterogeneous client data causing inconsistent clustering (as shown in Fig~\ref{fig:stab} (a)). 
 
To avoid collapse, we calibrate the \select function in Eq.~\ref{eq:select} by weighting with the accumulated selection probability $q_g$  during training. Specifically, let $v^t_g = \sum_{t'=1}^{t}\sum_{i=1}^{M}v^{t'}_{g,i} $ as the total number of times group $g$ is selected up to communication round $t$ across all clients $i$. The selection probability at communication round $t$ is then calculated as $q^t_g = \frac{v^t_g}{\sum_{g \in [G]} v^t_g}$, we drop the communication round $t$ to lighten notation in the later part. Then, we calculate the following loss function:
\begin{align} \label{eq:loss_k}
&L_{key} = -cos(h_{\theta}(x),k_{g})\\
& \text{where}~ g \in \argmax_{g \in [G]} \left({\rm cos}\big(h_{\theta}(x),k_g \big) -1 \right)  \cdot q_g, \nonumber
\end{align}

\begin{figure}[t!]
    \centering
\includegraphics[width=0.95\linewidth, height=3.5cm]{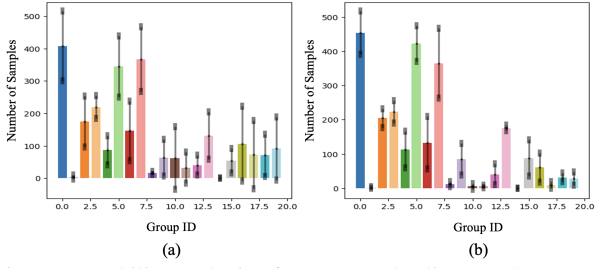}
     \vspace{-3mm}
    \caption{Stability analysis of one example client on CIFAR-100 dataset with $s=10$. We plot the mean and standard deviation of the prompt selection number overall communication rounds. (a) Without stability regularization, the variance is larger and is unstable. (b) With our proposed momentum updating, the variance is reduced and is more stable. }\label{fig:stab}
    \vspace{-6mm}
\end{figure}
To enforce the stability of clustering, we perform momentum parameter aggregation~\cite{he2020momentum,tarvainen2017mean} on the server side for both keys and group prompts to ensure selection consistency and knowledge consistency, respectively. Denoting the aggregated parameters for a group $g$'s key and prompts at round $t$ as $k_g^t$ and $p_g^t$, and the momentum parameters as $\hat{k}_g^t$ and $\hat{p}_g^t$ (see Algorithm~\ref{alg:training} in appendix), the parameters are updated as follows:
\begin{align}\label{eq:sta_key}
   \hat{k}_g^t &= \alpha_k \hat{k}_g^{t-1} + (1-\alpha_k) k_g^t, \\
     \hat{p}_g^t &= \alpha_g \hat{p}_g^{t-1} + (1-\alpha_g) p_g^t , \quad g\in[G], \notag
\end{align}
where $\alpha_k$ and $\alpha_g$ are the momentum rates ($0.5$ in our case).

\subsection{Block Coordinate Descent for Optimization}\label{sec:bcd}
Because we use \select function, a non-continuous function, for selecting group prompts, this renders the objective function in Eq.~\eqref{eq:pravg} to be non-smooth, further introducing optimization challenges. To effectively navigate the challenges, we employ a Block Coordinate Descent (BCD) method for the local training of Eq.~\eqref{eq:pravg}. The BCD optimization involves dividing the parameters into sub-groups and optimizing them in an iterative manner. In our case, this means optimizing the shared prompts and group prompts separately. Furthermore, the order of updating each parameter sub-group is critical for achieving good performance~\cite{shi2016primer}. Inspired by cognitive development theories on human learning progression~\cite{elman1993learning}, our method first learns the easy, common feature information (shared prompts) and then optimize the group prompts to extract more specialized knowledge.\footnote{To demonstrate the effectiveness of this ordered approach, we present an ablation study on the update sequences in Table~\ref{tab:ablation}.} In this way, we formulate the local objective functions for client $i$ as:
\begin{align}
    & \argmin_{P_G,W_C} \mathbb{E}_{(x,y) \! \sim \! \mathcal{D}_i} [ l(p_{\select(x)},P^{\star}_S,W_C;x,y)] ,\label{eq:stage1}\\
    & st. P^{\star}_S,W_C \in  \argmin_{P_S,W_C} \mathbb{E}_{(x,y) \! \sim \! \mathcal{D}_i} [ l(P_S,W_C;x,y)]. \label{eq:stage2}
\end{align}
We first find the optimal share prompts $P_S$ independently to learn common information. Then, the group prompts $P_G$ will residually learn group-specific knowledge upon the optimal shared prompts.
\begin{algorithm}[t!]
\textbf{Input:} Weights $W = \{W_C,P_G,P_S\}$, pre-trained ViT $h_{\theta}$, prompt selection module $\select$ with learnable keys $K =\{k_{g}\}_{g=1}^{G}$, training data   $(x,y) \sim \mathcal{D}$,  learning rate $\eta$, local training steps $E$ in one communication round. 
 \begin{algorithmic}[1]%
 \State \textbf{Block I:} Learn Shared Prompts Only:
\For{$e = 1\rightarrow E$}
 \State Optimize the Eq.~\eqref{eq:stage1}
 \State $ \{P^{\star}_S, W_C\}  \leftarrow \{P_S, W_C\} - \eta\cdot\nabla l(P_S,W_C;x,y) $ 
\EndFor
  \State \textbf{Block II:} Learn Group Prompts with frozen $\textcolor{blue}{P^{\star}_S}$ :
\For{$e = 1\rightarrow E$}
\State \Call{$p_g, k_g \gets$PromptSelection}{(x)}
\State $W \leftarrow W - \eta\cdot\nabla l(p_g,\textcolor{blue}{P^{\star}_S},W_C;x,y) $ 
 \algorithmiccomment{ Eq.~\eqref{eq:stage2}} 
\State $K \leftarrow K - \eta\cdot\nabla l_{key}(k_g;x,y) $ \algorithmiccomment{ Eq.~\eqref{eq:loss_k}} 
\EndFor
\Procedure{PromptSelection}{$\select,x$}
\State $p_g \leftarrow P_{\select(x)}$ \algorithmiccomment{Select group prompt} 
\State $k_g \leftarrow K_{\select(x)}$ \algorithmiccomment{Select group key}
\EndProcedure
 \end{algorithmic}
 \caption{Block Coordinate Descent on Local Client}
 \label{alg:BCD}
\end{algorithm} 

\noindent\textbf{Block I: Learning Shared Prompts.} The shared prompts are learned independently (lines 1-5 in Algorithm~\ref{alg:BCD}) to capture common information with the following loss function:
\begin{align}
    L_{share} = l(P_S,W_C;x,y)
\end{align}
where $l$ is the cross-entropy loss.

\noindent \textbf{Block II: Learning Group Prompts and Keys}: Having obtained the shared prompts, the group prompts are learned (lines 6-11 in Algorithm~\ref{alg:BCD}) to extract group-specific knowledge. Initially, the shared prompts $P_S$ are inserted but frozen. Then, we employ \select function to determine $x$'s group $g = \select(x)$ (detailed in Eq.~\eqref{eq:select}) and insert corresponding group prompts $p_g$ to extract group-specific features \cite{raghu2021vision}. The feature representation derived from these prompts is combined with the cls token via average pooling for the final classification. Consequently, the group prompts effectively learn group-specific knowledge. Additionally, The keys $K$ in \select function are learned simultaneously, and the total loss function is:
\begin{align}
    L_{total} = L_{key} + l(P_G,W_C;x,y),
\end{align}
where $l$ is cross-entropy and $L_{key}$ is the loss for learning \select function. We perform Algorithm~\ref{alg:BCD} to optimize the parameters over the global communication rounds.
\subsection{Efficient Inference}
In this section, we explain the inference procedure of \ours{}. Given a sample $x$, we first use the \select function (see Eq.~\eqref{eq:select}) to determine its group membership $g = \select(x)$. Then, the shared prompt $P_S$ and corresponding group prompt $p_g$ are inserted into the model to achieve sample-level adaptation for inference. When performing tests on new clients, the frequencies of selected group prompts can be automatically adjusted by \select function (shown in Fig.~\ref{fig:main} (c)), ensuring our model aligns with their local data distributions. We provide more training and inference details of our proposed \ours{} in the Appendix.
%

\section{Theoretical Analysis}
In this section, we provide analytical justification for narrowing the empirical risk of the global model found by empirical loss minimization and the population risk of the optimal model of a client. Following the heterogeneity setting in~\cite{marfoq2021federated}, we assume each local client $i$'s data distribution $\mathcal{D}_i$ is a mixture of $G$ underlying distributions (groups).
\begin{assumption}
On a client $i$, there exist $G$ underlying (independent) distributions $\mathcal{D}^i_g$, $g\in [G]$, such that for $i \in [M]$, $\mathcal{D}_i$ is mixture of the distributions $\mathcal{D}^i_g$ with mixing probability vector $\pi_i = [\pi_1^i,\ldots, \pi_G^i]$:
\begin{align} \mathcal{D}_i=\sum_{g\in[G]}\pi_g^i\,\mathcal{D}_g^i, \quad \sum_g \pi_g^i =1 \quad \text{and} \quad \pi_g^i \geq 0,
\end{align}
where $\pi_g^i = \frac{N_g^i}{N_i}$ is the probability of a data sample on client $i$ belong to group $g$ and $N^i_g$ is a fixed number of samples from $\mathcal{D}_g^i$ for all $g\in [G], i \in [M]$. 
\end{assumption}
Based on this, we also introduce the probability distribution $\mathcal{C}_g$ of data belonging to group $g$ as follows.
For a group $g$, its global data distribution $\mathcal{C}_g$ is a mixture of the distributions $\mathcal{D}^i_g, i \in [M]$ with mixing probability vector $\pi_g = [\pi_g^1,\ldots, \pi_g^M]$:
\begin{align}
      \mathcal{C}_g = \sum_{i\in[M]} \pi_g^i \,\mathcal{D}^i_g.
\end{align}
Following \cite{hugeneralization}, we refer to the distribution $\mathcal{C}_g$ as the ``participated clients' data distribution'' for the $g$-th group. 

Finally, for $g\in[G]$ and hypothesis $h_g \in \mathcal{H}$, we use $h_{\select} = \{h_1,...,h_g,...,h_G\}_{\select(x)}$ to denote the group-aware hypothesis determined by function \select when datapoint $x$ is given as input.
Let $\widehat{h}_g = \arg\min_{h\in \mathcal{H}}\mathcal{L}_{\widehat{\mathcal{C}}_g}(h)$ denote the empirical model for data group $g$ and $\hat{h}_{\select} = \{\hat{h}_1,...,\hat{h}_g,...,\hat{h}_G\}_{\select(x)}$ denote the corresponding global model.  
\begin{theorem}[Gap between the global and local performance] \label{th:ourtheory2} Assume the loss function $\ell$ is bounded in $[0,1]$ and the function \select is a data grouping method. Let the VC-dimension of hypothesis class $\mathcal{H}$ be $d$. Then,
with a probability of at least $1-\delta$ over the training set,
\begin{align}\label{eq:the}
\mathcal{L}_{\widehat{\mathcal{D}}_i}(\widehat{h}_{\select})  &  -\min _{h \in H} \mathcal{L}_{\mathcal{D}_i}(h)
   \leq \\ & \sqrt{\frac{\log \frac{1}{\delta}}{N_i}}  + \underbrace{2\sum_{i=1}^G \frac{N^i_g}{N_i} \sqrt{\frac{2d}{N_g}\left(1+\log(\frac{N_g}{d})\right )} }_{\text{Generalization}} \notag \\
    +  &\underbrace{\sum_{g=1}^{G} \frac{N^i_g}{N_i} \left(\operatorname{disc}(\mathcal{D}_g^i, \mathcal{C}_{g}) + \operatorname{disc}(\widehat{\mathcal{D}}_g^i, \widehat{\mathcal{C}}_{g}) \right)}_{\text{Distribution~Discrepancy}},\nonumber
\end{align}  
where  $\operatorname{disc}_{\mathcal{H}}\left(\mathcal{D}_1, \mathcal{D}_2\right)=\max _{h \in \mathcal{H}}\left|\mathcal{L}_{\mathcal{D}_1}(h)-\mathcal{L}_{\mathcal{D}_2}(h)\right|$ and $N_g$ is the tocal number of data in group $g$ from all the clients.
 \end{theorem}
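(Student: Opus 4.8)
The plan is to use the mixture structure of Assumption~1 to reduce the client-level gap to a weighted sum of group-level gaps, and then to bound each group-level gap by chaining a discrepancy inequality, the empirical-risk-minimization optimality of $\widehat{h}_g$, and a VC uniform-convergence estimate. First I would invoke linearity of the expected loss under $\mathcal{D}_i=\sum_g \pi_g^i\,\mathcal{D}_g^i$ together with the hypothesis that \select is a genuine data-grouping map (so every sample routed to group $g$ is scored by $\widehat{h}_g$) to write the exact decomposition
\begin{align}
\mathcal{L}_{\widehat{\mathcal{D}}_i}(\widehat{h}_{\select}) = \sum_{g=1}^{G} \frac{N_g^i}{N_i}\,\mathcal{L}_{\widehat{\mathcal{D}}_g^i}(\widehat{h}_g).
\end{align}
On the other side, since $\mathcal{L}_{\mathcal{D}_i}(h)=\sum_g \pi_g^i\,\mathcal{L}_{\mathcal{D}_g^i}(h)$, the minimum of a convex combination dominates the combination of minima, so $\min_{h}\mathcal{L}_{\mathcal{D}_i}(h)\ge \sum_g \pi_g^i \min_{h}\mathcal{L}_{\mathcal{D}_g^i}(h)$. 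Subtracting, the left-hand side of the theorem is upper bounded by the weighted sum (weights $N_g^i/N_i=\pi_g^i$) of the per-group gaps $\mathcal{L}_{\widehat{\mathcal{D}}_g^i}(\widehat{h}_g)-\min_{h}\mathcal{L}_{\mathcal{D}_g^i}(h)$, so it suffices to control each group separately.

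For a fixed group $g$ I would build the following chain. The definition of $\operatorname{disc}$ on the empirical distributions gives $\mathcal{L}_{\widehat{\mathcal{D}}_g^i}(\widehat{h}_g)\le \mathcal{L}_{\widehat{\mathcal{C}}_g}(\widehat{h}_g)+\operatorname{disc}(\widehat{\mathcal{D}}_g^i,\widehat{\mathcal{C}}_g)$. Because $\widehat{h}_g$ minimizes $\mathcal{L}_{\widehat{\mathcal{C}}_g}$, I compare it to the population-optimal $h_g^{\star}=\arg\min_{h}\mathcal{L}_{\mathcal{C}_g}(h)$; a single uniform-convergence estimate $\sup_{h}|\mathcal{L}_{\widehat{\mathcal{C}}_g}(h)-\mathcal{L}_{\mathcal{C}_g}(h)|\le \Phi_g$ then yields $\mathcal{L}_{\widehat{\mathcal{C}}_g}(\widehat{h}_g)\le \min_{h}\mathcal{L}_{\mathcal{C}_g}(h)+\Phi_g$. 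A second, population-level discrepancy step transfers the global group optimum back to the local group optimum, $\min_{h}\mathcal{L}_{\mathcal{C}_g}(h)\le \min_{h}\mathcal{L}_{\mathcal{D}_g^i}(h)+\operatorname{disc}(\mathcal{D}_g^i,\mathcal{C}_g)$, by evaluating the global risk at the local minimizer and using that $\operatorname{disc}$ is a uniform bound over $\mathcal{H}$. Collecting the three pieces produces, per group, exactly the two discrepancy terms of the theorem plus the residual $\Phi_g$.

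It remains to quantify $\Phi_g$ and to pass from expectation to a high-probability statement. For a loss bounded in $[0,1]$ and a class of VC-dimension $d$, the standard symmetrization/Massart bound controls the expected uniform deviation over the $N_g$ global samples of group $g$ by $2\sqrt{\tfrac{2d}{N_g}\bigl(1+\log(N_g/d)\bigr)}$, which is precisely the summand in the Generalization term (the factor $2$ arising from Rademacher symmetrization). The confidence term $\sqrt{\log(1/\delta)/N_i}$ I would obtain from a single bounded-differences (McDiarmid) argument at the client scale: the empirical client quantity is a function of the $N_i$ local samples, each of which perturbs it by $O(1/N_i)$, so it concentrates around its expectation at rate $\sqrt{\log(1/\delta)/N_i}$. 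Averaging the per-group bounds with weights $N_g^i/N_i$ and adding the confidence term then reproduces the claimed inequality.

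The step I expect to be the main obstacle is the two-scale bookkeeping. The generalization estimate lives at the group-global scale (the $N_g$ samples drawn according to $\mathcal{C}_g$, on which the data-dependent $\widehat{h}_g$ is trained), whereas the concentration and the final averaging live at the client scale ($N_i$ samples of $\mathcal{D}_i$). Making the union bound over the $G$ groups consistent with a single failure probability $\delta$, correctly separating the randomness that governs $\widehat{h}_g$ and $\widehat{\mathcal{C}}_g$ from that governing $\widehat{\mathcal{D}}_g^i$, and justifying that \select partitions the client sample cleanly enough for the first display to hold as an equality rather than merely an approximation, are the places that demand the most care.
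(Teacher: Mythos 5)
Your proposal is correct and follows essentially the same route as the paper's own proof: the mixture decomposition into weighted per-group gaps (the paper's Lemma on splitting the distribution risk plus superadditivity of the min over the mixture), the two discrepancy terms obtained by comparing $\widehat{\mathcal{D}}_g^i$ with $\widehat{\mathcal{C}}_g$ and $\mathcal{D}_g^i$ with $\mathcal{C}_g$ around the empirical minimizer $\widehat{h}_g$, and McDiarmid plus symmetrization/Massart for the confidence and generalization terms. The two-scale bookkeeping you flag is resolved in the paper exactly as you anticipate: the bounded-differences argument is applied to the weighted sup-deviation $\sum_g \frac{N_g^i}{N_i}\max_{h}\bigl(\mathcal{L}_{\mathcal{C}_g}(h)-\mathcal{L}_{\widehat{\mathcal{C}}_g}(h)\bigr)$ as a function of the pooled group samples defining the $\widehat{\mathcal{C}}_g$'s (not client $i$'s samples alone), with per-sample influence $\frac{N_g^i}{N_i N_g}$ whose squared sum is at most $1/N_i$ since $N_g^i \le N_g$, which recovers the claimed $\sqrt{\log(1/\delta)/N_i}$ rate.
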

The detailed proof is provided in the Appendix. The left-hand side of Eq.~\eqref{eq:the} represents the gap between the minimum empirical risk\footnote{We also give the gap on population distribution in the Appendix.} of the global model found by empirical loss minimization using the \select grouping function and the population loss of the optimal model of client $i$. The right-hand side of Eq.~\eqref{eq:the} bounds this gap with respect to weighted averages of two factors: 1) \textbf{Generalization} (GE) that is related to $N_g$, and 2) \textbf{Distribution Discrepancy} (DD) between the global group distribution $\mathcal{C}_g$ and the local group distribution $\mathcal{D}_g^i$ of client $i$.  \ours{} accounts for these two terms and reduces the gap with shared and group prompts. Specifically, recent studies~\cite{ostapenko2022continual,hayes2020remind} have shown that low-level representations exhibit considerable similarity across different groups, suggesting a relatively small DD term.  In response to these findings, our approach involves inserting shared prompts $P_S$ in early (low-level) layers of ViT and training using all data, thereby maximizing the value of $N_g$ ($g=1, N_g = N$) to effectively reduce the dominant GE term. At higher layers, where the DD dominates the bound due to diverse feature representations~\cite{yu2022tct}, a selection module groups similar data to learn the same group prompt, ensuring $\mathcal{D}^i_g \approx \mathcal{C}^i_g$ (thus reducing DD).
\section{Experiments}
\begin{table*}[t!]
\caption{
Performance comparison of different methods on the CIFAR-100 dataset and the Five dataset. The \textbf{bold} and \underline{underline} highlights represent the best and second-best results, respectively.}
\centering
\resizebox{0.8\linewidth}{!}{%
\begin{tabular}{lcccccc|c c c c}
\hline\hline 
 Datasets &   \multicolumn{6}{c}{ CIFAR-100 (\%)  $\uparrow$ } & \multicolumn{3}{c}{ Five-Dataset (\%) $\uparrow$} \\
\hline
\multirow{2}{*}{ Method } & \multicolumn{2}{c}{ Global} & \multicolumn{2}{c}{ Local } & \multicolumn{2}{c|}{Worst Local } &   \multirow{2}{*}{ Global} &\multirow{2}{*}{Local} & \multirow{2}{*}{Worst Local}  \\
 \cline{2-7} & $s=50$ & $s=10$ & $s=50$ & $s=10$ & $s=50$ & $s=10$  & & & \\
\hline Head-Tune & $76.69$ & $75.35$   & $76.68$  &  $75.36$  & $65.96$ & $55.53$ & $75.09$& $75.09$  & $28.60$    \\
\hline FedVPT~\cite{jia2022visual} & $82.35$ & $80.79$ &  
$85.67$ & $80.79$ &  $72.28$  &  $66.43$  & $80.88$ & $80.88$ & $53.04$\\
\hline FedVPT-D~\cite{jia2022visual} & $85.85$ & $79.49$ & 
$85.85$& $79.49$& $74.83$& $63.38$ & $\underline{\smash{82.09}}$  & $82.09$  & $54.96$  \\
\hline FedMix~\cite{yoon2021fedmix} & $85.65$ & $80.83$  & $85.67$ & $80.82$ & $74.33$ & $69.59$ & $81.07$ & $81.07$ & $42.62$ \\
\hline 
pFedPG~\cite{yang2023efficient} & $84.67$ & $81.19$  & $85.22$ & $\underline{\smash{81.82}}$  & $72.33$ &  $\underline{\smash{70.53}}$ & $72.12$ &  $\underline{\smash{82.48}}$  & $\underline{\smash{56.24}}$ \\
\hline FedEM~\cite{marfoq2021federated} & $81.52$ & $78.41$ & $81.53$  &  $78.40$ & $72.06$ &  $63.87$ & $79.54$ & $79.54$ & $46.50$ \\
\hline 
FedPR~\cite{feng2023learning} & $\underline{\smash{85.92}}$ & $\underline{\smash{81.42}}$   & $\underline{\smash{85.92}}$  & $81.35$ &     $\underline{\smash{75.26}}$ &$63.93$ &  $81.29$ & $81.29$ & $42.12$  \\ 
\hline \ours{} (Ours) &  $\mathbf{86.72}$ &   $\mathbf{84.64}$  & $\mathbf{86.71}$ &  $\mathbf{84.64}$  & $\mathbf{77.56}$  &  $\mathbf{73.85}$ &  $\mathbf{83.40}$ & $\mathbf{83.40}$ & $\mathbf{61.04}$ \\
\hline\hline
\end{tabular}
}
 \label{tab:performance}
 \vspace{-2mm}
\end{table*}
\subsection{Experiment Setup}
\noindent\textbf{Datasets.}
In this section, we introduce datasets with label and feature heterogeneity. \textit{Label heterogeneity}: we demonstrate the effectiveness of our proposed approach for \textit{label heterogeneity} using two datasets. (1) \textit{CIFAR100 dataset} comprises 50,000 training images and 10,000 testing images distributed across 100 classes. (2) \textit{Fivedataset} consists of a sequence of 5 datasets (SVHN, CIFAR10, not-MNIST, Fashion-MNIST, and MNIST) as outlined in the work by~\cite{ebrahimi2020adversarial}.
\textit{Feature heterogeneity}: we also consider feature heterogeneity and follow~\cite{yang2023efficient} to demonstrate the effectiveness of our proposed approach using Office-Caltech10 and DomainNet for \textit{feature heterogeneity}: (1) Office-Caltech10~\cite{saenko2010adapting} is composed of four data domains, including Amazon, DSLR, Webcam, and Caltech. Each domain contains ten classes, with 2,533 images in total. (2) DomainNet~\cite{peng2019moment} consists of 0.6 million images of 345 classes distributed across six domains: clipart, infograph, painting, quickdraw, real, and sketch. Following~\cite{li2021fedbn,yang2023efficient}, we use the top ten most frequent classes to form a sub-dataset for our experiments. \\
\noindent\textbf{Non-IID Settings.}
In this section, we introduce the FL environments and the data partition strategies for various datasets. For clients with \textit{label heterogeneity}, in CIFAR-100, we introduce 100 clients and set a low (hard) client participating ratio ($\gamma$) to 0.05. To introduce data heterogeneity among clients, we apply the “Pathological Partition” ~\cite{oh2021fedbabu,li2022fedtp}. We first sort the data by labels and then allocate data from a specific number of classes ($s$) to each client. Since $s$ is the number of classes each user can have, as $s$ decreases, the degree of data heterogeneity increases. As to the Five Datasets, we distribute the data among 20 clients, with every 4 clients originating from the same dataset. We set the participating rate to $\gamma = 0.1$ and perform training for 50 communication rounds. For conducting clients with \textit{feature heterogeneity}, we follow the newest benchmark~\cite{yang2023efficient} and assign a data domain to a client, indicating the number of clients ($M$) is set as 4 and 6 for Office-Caltech10 and DomainNet, respectively.\\
\noindent\textbf{Implementation Details.}  Following~\cite{yang2023efficient,feng2023learning}, we use ImageNet-21K pre-trained ViT-B-16~\cite{dosovitskiy2020image} as our model because it achieves a good trade-off between performance and efficiency. Since ViT-B-16 is originally trained on images with size 224 and patch size 16, we resize our images to 224 to align with the model's specifications. During the prompt-tuning process, we focus on the shared prompts, group prompts, and the classifier. We use local training epochs (E = 5) for all experiments and set the prompt length to 1 for efficiency. For \textit{label heterogeneity} datasets, we use the last layer's cls token as the input feature for \select and set group number $G$ as $20$ and $5$ for CIFAR-100 and Five-dataset respectively. For \textit{feature heterogeneity}, we use the intermediate ($5$-th) layer's cls token as the input feature for \select because intermediate layers capture the texture-related information~\cite{hayes2020remind} and set $G$ as $4$ and $6$ for Office and DomainNet datasets respectively. We follow other settings in~\cite{yang2023efficient} for consistency. \\
\noindent\textbf{Baseline Methods.}
For \textit{label heterogeneity}, we conduct a comparative analysis of our method against various global-model approaches: VPT~\cite{jia2022visual} that optimized using FedAvg~\cite{mcmahan2017communication} (FedVPT), Head-Tuning~\cite{sun2022exploring}, FedMix~\cite{yoon2021fedmix}, as well as recent FedPR~\cite{feng2023learning}. Additionally, for personalized Federated Learning, we considered pFedPG~\cite{yang2023efficient,shamsian2021personalized} and FedEM~\cite{marfoq2021federated}. For \textit{feature heterogeneity}, we implement the same baseline methods as~\cite{yang2023efficient} as it is the newest benchmark for FL feature heterogeneity.

\subsection{Results}\label{sec:compare}
\begin{table*}[t!]
\centering
\caption{Performance comparison of different methods on the Office-Caltech10 and DomainNet datasets. The \textbf{bold} and \underline{underline} highlights represent the best and second-best results respectively.}
\resizebox{0.9\linewidth}{!}{
\begin{tabular}{lccccc|ccccccc}
\hline \hline \multirow{2}{*}{\begin{tabular}{l} 
Datasets \\
Method
\end{tabular}} & \multicolumn{5}{c}{ Office-Caltech10 (\%)  $\uparrow$ } & \multicolumn{7}{c}{ DomainNet (\%) $\uparrow$} \\
\hline & $A$ & $C$ & $D$ & $W$ & Avg. & $C$ & $I$ & $P$ & $Q$ & $R$ & $S$ & Avg. \\
\hline
Per-FedAvg~\cite{fallah2020personalized} & $91.67$ & $90.22$ & $100.0$ & $100.0$ & $95.47$ & $69.39$ & $48.71$ & $82.07$ & $35.30$ & $90.63$ & $72.56$ & $66.44$ \\
\hline FedRep~\cite{collins2021exploiting} & $91.15$ & $88.44$ & $100.0$ & $100.0$ & $94.90$ & $64.26$ & $38.20$ & $72.86$ & $62.10$ & $82.66$ & $60.11$ & $63.37$ \\
\hline FedVPT~\cite{jia2022visual} & $92.71$ & $84.44$ & $100.0$ & $100.0$ & $94.29$ & $65.59$ & $44.14$ & $76.58$ & $47.30$ & $91.04$ & $60.29$ & $64.16$ \\
\hline FedVPT-D~\cite{jia2022visual} & $91.67$ & $89.33$ & $100.0$ & $100.0$ & $95.25$ & $63.31$ & $43.07$ & $74.80$ & $54.80$ & $87.26$ & $67.15$ & $65.07$ \\
\hline pFedPG~\cite{yang2023efficient} & $94.79$ & $92.44$ & $100.0$ & $100.0$ & $\underline{\smash{96.81}}$ & $73.00$ & $50.08$ & $84.33$ & $60.00$& $94.00$ & $68.41$ & $71.64$ \\
\hline FedPR~\cite{feng2023learning} & $95.31$ & $95.11$ & $100.0$ & $96.61$ & $96.76$ & $88.02$ & $49.16$ & $86.11$ & $70.00$ & $96.06$ &  $83.94$ & $\underline{\smash{78.88}}$ \\
\hline
\ours{} (Ours)  & $\mathbf{95.31}$ & $\mathbf{95.56}$  & $\mathbf{100.0}$ & $\mathbf{100.0}$ & $\mathbf{97.72}$  & $\mathbf{89.54}$ & $\mathbf{52.82}$ & $\mathbf{87.56}$ & $\mathbf{70.00}$ & $\mathbf{96.14}$ & $\mathbf{86.82}$ & $\mathbf{80.48}$ \\
\hline\hline
\end{tabular}
}
\label{tab:feature_shift}
\vspace{-4mm}
\end{table*}
\subsubsection{Label Heterogeneity Results}
We calculate three metrics for evaluation: (1) \textit{Global Accuracy} represents the mean accuracy made by each client overall all testing images regarded as global distribution performance (2) \textit{Local Accuracy} is calculated by averaging the accuracy of each local client on its local test data. (3) \textit{Worst Local Accuracy} demonstrates the worst-performing client result, showcasing the ability to adapt to local data distribution. We follow~\cite{li2022fedtp} to report the averaged testing accuracy for the last 10 consecutive global communication rounds.
\\
\textbf{Overall Performance} The results of CIFAR-100 and Five-datasets are presented in Table~\ref{tab:performance}. Due to the difficulty of datasets, directly leveraging the pre-trained ViT and applying head-tuning cannot achieve good performance. With the help of prompt-tuning, FedVPT serves as a strong baseline compared to recent personalized FL algorithms. This can be attributed to the effectiveness of transformers in mitigating catastrophic forgetting and accelerating convergence in dealing with heterogeneous data~\cite{qu2022rethinking}, which validates our motivation to apply ViT in FL. With the help of our proposed \ours{}, we outperform all baselines on both global and worst-local test accuracy across various degrees of data heterogeneity and datasets by a significant margin.
\\
\textbf{Different Label Heterogeneity Level.} As $s$ decreases, this represents an increase in label heterogeneity. The CIFAR-100 results in Table~\ref{tab:performance} show a performance drop across all methods as heterogeneity increases with the decrease of $s$ from 50 to 10. Fortunately, our \ours{} demonstrates robustness to data heterogeneity, with a smaller performance drop compared with other methods.
\subsubsection{Feature Heterogeneity Results.}
In addition to label heterogeneity, our method can also be applied to datasets with feature shifts. We follow~\cite{yang2023efficient} and report both the performance for each client and the average performance overall for clients. The results of Office-Caltech10 and DomainNet datasets with the presence of domain shifts across clients are presented in Table~\ref{tab:feature_shift}. Among all baselines, our \ours{}
achieves the highest average accuracies on Office-Caltech10 and DomainNet at $97.72\%$ and $80.48\%$ respectively. In addition, \ours{} can also align the global model with different local client data with the worst local performance of each dataset at $95.31\%$ and $52.82\%$ respectively. The results validate our goal to reduce the performance gap between global and local data distribution.

\subsection{Global and Local Performance Trade-off}
We demonstrate our method's capacity to achieve high test accuracy on both global data distribution and local clients' data distributions. As shown in Figure~\ref{fig:100pareto}, we visualize both the global accuracy and the worst local accuracy (client with the worst test accuracy) for the CIFAR-100 dataset with $s=10$. Points situated in the top-right corner indicate better performance in both global data distribution and local clients' data distribution. PFL methods that are finetuned locally tend to overfit on local data distribution. GFL methods, though achieving good generalization, often underperform in local data. Notably, our method shows the greatest capacity for aligning with both global and local data distributions.
\subsubsection{Analysis and Ablation Study}\label{sec:ablation}
In this section, we provide a detailed analysis of each module in our methods. For more analysis, please see the Appendix.
\\
\textbf{Effect of Shared Prompt and Group Prompt.}
To demonstrate the benefits of reducing the terms related to GE and DD as outlined in Theorem~\ref{th:ourtheory2}, we conduct ablation studies on shared and group prompts. As indicated in Table~\ref{tab:ablation}, replacing shared prompts with group prompts, to reduce distribution discrepancy, resulting in a $7\%$ gain in the worst local accuracy. This implies successfully aligning the global model with local data distributions, thereby validating our theoretical motivation. 
Additionally, the group prompts improve global accuracy by around $3\%$ because similar data lead to lower gradient dissimilarity~\cite{kumar2022fine} and benefit the optimization process~\cite{karimireddy2020scaffold}. By additionally adding shared prompts and optimizing prompts with BCD, the global and worst local performance improve by around $2\%$ and $3\%$ respectively. As a result, learning common information with shared prompts benefits the generalization. 

\noindent\textbf{Effect of Block Coordinate Descent.}
In this section, we analyze the effect of our proposed BCD optimization (Section~\ref{sec:bcd}) on the CIFAR-100 dataset, specifically with $s=10$. As shown in Table~\ref{tab:ablation}, the direct addition of group prompts leads to a performance decrease by nearly $3\%$. With the aid of our BCD optimization that iteratively learns shared prompts first and then groups prompts, a significant improvement by $7\%$ is observed. When inverting the BCD order (denoted as BCD-Inv), the result drops significantly. These results validate our BCD optimization approach.
\begin{table}[t!]
\caption{Effect of different prompts and block coordinate descent. We report the results on the CIFAR-100 dataset with $s=10$.}
\centering
\resizebox{0.95\linewidth}{!}{
\begin{tabular}{c|c|c|c|c|c}
\hline\hline
 Share & Group & BCD-Inv &  BCD &  Global $\uparrow$ & Worst Local $\uparrow$\\
   \hline
$\sqrt{ }$ & &  & & $79.49$ & $63.38$\\ 
   \hline
 & $\sqrt{ }$ &  & & $82.51$ &$70.79$\\ 
   \hline
$\sqrt{ }$ & $\sqrt{ }$ &  & & $77.82$ & $62.62$\\ 
   \hline
$\sqrt{ }$ & $\sqrt{ }$  &$\sqrt{ }$  &  & $76.76$ & $62.15$\\ 
   \hline
$\sqrt{ }$ & $\sqrt{ }$ &   &  $\sqrt{ }$ 
 & $\mathbf{84.64}$ & $\mathbf{73.85}$\\ 
\hline\hline 
\end{tabular}
}
 \label{tab:ablation}
\end{table}

\noindent\textbf{Ablation on Clustering Performance.}
We conduct ablation studies on various improvements in learning \select function. We evaluate the clustering accuracy using the CIFAR-100 test dataset, with coarse labels~\cite{krizhevsky2009learning} serving as the ground truth. As shown in Table~\ref{tab:cluster_ab}, the direct application of FedAvg to learn keys results in clustering all data into the same group, resulting in only 5\% accuracy. By adding selection probability $q_g$, \select function successfully learns meaningful keys, as shown in Eq.~\eqref{eq:loss_k}. Introducing momentum update, as shown in Eq.~\eqref{eq:sta_key}, further enhances the clustering performance, achieving approximately 10\% and 5\% improvements, respectively. Additionally, in Fig~\ref{fig:stab}, we plot the mean and standard deviation of the prompt selection numbers over all communication rounds. This demonstrates that our proposed momentum updating improved stability. 
\begin{table}[t!]
\caption{Ablation studies with different improvements on \select. We report the results on the CIFAR-100 dataset with $s=10$.}
\centering
\resizebox{0.7\linewidth}{!}{
\begin{tabular}{c|c|c}
\hline\hline
Dataset &  \multicolumn{2}{c}{ CIFAR-100 (\%)} \\
\hline
Method&  $s=50$ & $s=10$ \\
   \hline
FedAvg & $5.00$ & $5.00$\\ 
   \hline
w/ $q_g$  & $46.09$ & $51.26$ \\ 
   \hline
w/ momentum and $q_g$  & $56.39$ & $56.62$\\ 
\hline\hline 
\end{tabular}
}
\vspace{-3mm}
 \label{tab:cluster_ab}
\end{table}
\begin{table}[t!]
\caption{The size of learnable parameters compared with that of vanilla FedAvg~\cite{mcmahan2017communication}, which trains the whole models.}
\centering
\resizebox{\linewidth}{!}{
\begin{tabular}{c|c|c|c|c}
\hline\hline
Method & \multicolumn{3}{c|}{FedAvg} &  Ours \\
  \hline
Architecture & ResNet-18 & ResNet-50 & ViT-16/B  & ViT-16/B \\

  \hline
$\#$ Parameters $\downarrow$ & $11M$ 
 & $24M$    & $86M$ & $\mathbf{0.1M}$ 
     \\
\hline\hline 
\end{tabular}
}
\vspace{-2mm}
 \label{tab:commu}
\end{table}

\noindent\textbf{Where to attach prompts?}
\begin{figure}[t!]
    \centering
    \includegraphics[width=0.8\linewidth]{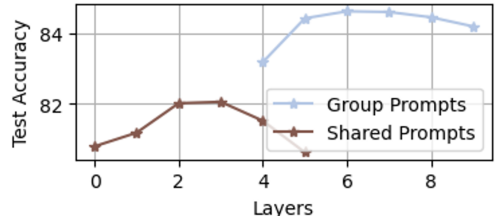}
    \vspace{-4mm}
    \caption{Exploring prompt insertion layers on CIFAR100 ($s$=10). The brown curve represents performance using only shared prompts, while the blue curve illustrates performance with group prompts inserted at varying layers, alongside shared prompts in layer 3.}\label{fig:where}
    \vspace{-4mm}
\end{figure}
In this section, investigate the influence of positions to attach prompts. Here we take CIFAR-100 with $s=10$ as an example and use a heuristic search strategy: (1) We start with examining the position for shared prompts by adding them to the first $U_S$ layers. Our findings, shown in Fig~\ref{fig:where}, inserting shared prompts beyond the $3$-rd layer decreases performance. This is because higher layers suffer more from heterogeneity~\cite{yu2022tct}. As a result, $U_S=3$. (2) Based on the best position for shared prompts, we study the position of group prompts. As depicted in Fig~\ref{fig:where}, adding prompts from $4$-th to $6$-th yields the optimal performance of $84.64\%$. In summary, without specific designs (\ie, our algorithm), training prompts on higher layers prove challenging due to increased heterogeneity. We suggest that prompt tuning in FL should take this aspect into consideration. 

\noindent\textbf{Efficiency.}
In this section, we conduct a comparison between the number of parameters that need training and communication between \ours{} and training a whole network in classical FL (\eg, FedAvg~\cite{mcmahan2017communication}). As shown in Table~\ref{tab:commu}, prompt-tuning requires significantly less trainable parameters compared with traditional FL thus improving the efficiency and saving the communication cost.
\section{Conclusion}
In this work, we demonstrate the significant advancements in FL through the innovative integration of ViT to bridge GFL and PFL. This is achieved through our proposed \ours{}, which introduces a shared and group prompt tuning strategy, enabling the model to adeptly capture both common and group-specific features. The prompt selection module of \ours{} facilitates the training of a global model that can automatically adapt to varied local client data distributions without necessitating local fine-tuning. We employ BCD for effective optimization of the learnable parameters. Theoretically, our approach minimizes the error bound between global and local performances, leading to enhanced model efficacy. Empirical validations against state-of-the-art baselines and thorough ablation studies further underscore \ours{}'s superior performance and efficiency.
{
    \small
    \bibliographystyle{ieeenat_fullname}
    \bibliography{main}

\begin{thebibliography}{60}
\providecommand{\natexlab}[1]{#1}
\providecommand{\url}[1]{\texttt{#1}}
\expandafter\ifx\csname urlstyle\endcsname\relax
  \providecommand{\doi}[1]{doi: #1}\else
  \providecommand{\doi}{doi: \begingroup \urlstyle{rm}\Url}\fi

\bibitem[Bao et~al.(2023)Bao, Wang, Wu, and He]{bao2023optimizing}
Wenxuan Bao, Haohan Wang, Jun Wu, and Jingrui He.
\newblock Optimizing the collaboration structure in cross-silo federated learning.
\newblock \emph{arXiv preprint arXiv:2306.06508}, 2023.

\bibitem[Bhojanapalli et~al.(2021)Bhojanapalli, Chakrabarti, Glasner, Li, Unterthiner, and Veit]{bhojanapalli2021understanding}
Srinadh Bhojanapalli, Ayan Chakrabarti, Daniel Glasner, Daliang Li, Thomas Unterthiner, and Andreas Veit.
\newblock Understanding robustness of transformers for image classification.
\newblock In \emph{Proceedings of the IEEE/CVF international conference on computer vision}, pages 10231--10241, 2021.

\bibitem[Bommasani et~al.(2021)Bommasani, Hudson, Adeli, Altman, Arora, von Arx, Bernstein, Bohg, Bosselut, Brunskill, et~al.]{bommasani2021opportunities}
Rishi Bommasani, Drew~A Hudson, Ehsan Adeli, Russ Altman, Simran Arora, Sydney von Arx, Michael~S Bernstein, Jeannette Bohg, Antoine Bosselut, Emma Brunskill, et~al.
\newblock On the opportunities and risks of foundation models.
\newblock \emph{arXiv preprint arXiv:2108.07258}, 2021.

\bibitem[Caldarola et~al.(2021)Caldarola, Mancini, Galasso, Ciccone, Rodol{\`a}, and Caputo]{caldarola2021cluster}
Debora Caldarola, Massimiliano Mancini, Fabio Galasso, Marco Ciccone, Emanuele Rodol{\`a}, and Barbara Caputo.
\newblock Cluster-driven graph federated learning over multiple domains.
\newblock In \emph{Proceedings of the IEEE/CVF Conference on Computer Vision and Pattern Recognition}, pages 2749--2758, 2021.

\bibitem[Chen and Chao(2020)]{chen2020fedbe}
Hong-You Chen and Wei-Lun Chao.
\newblock Fedbe: Making bayesian model ensemble applicable to federated learning.
\newblock \emph{arXiv preprint arXiv:2009.01974}, 2020.

\bibitem[Collins et~al.(2021)Collins, Hassani, Mokhtari, and Shakkottai]{collins2021exploiting}
Liam Collins, Hamed Hassani, Aryan Mokhtari, and Sanjay Shakkottai.
\newblock Exploiting shared representations for personalized federated learning.
\newblock In \emph{International Conference on Machine Learning}, pages 2089--2099. PMLR, 2021.

\bibitem[Deng et~al.(2023)Deng, Zhong, Dou, and Li]{deng2023fairness}
Wenlong Deng, Yuan Zhong, Qi Dou, and Xiaoxiao Li.
\newblock On fairness of medical image classification with multiple sensitive attributes via learning orthogonal representations.
\newblock In \emph{International Conference on Information Processing in Medical Imaging}, pages 158--169. Springer, 2023.

\bibitem[Ding et~al.(2023)Ding, Qin, Yang, Wei, Yang, Su, Hu, Chen, Chan, Chen, et~al.]{ding2023parameter}
Ning Ding, Yujia Qin, Guang Yang, Fuchao Wei, Zonghan Yang, Yusheng Su, Shengding Hu, Yulin Chen, Chi-Min Chan, Weize Chen, et~al.
\newblock Parameter-efficient fine-tuning of large-scale pre-trained language models.
\newblock \emph{Nature Machine Intelligence}, 5\penalty0 (3):\penalty0 220--235, 2023.

\bibitem[Dosovitskiy et~al.(2020)Dosovitskiy, Beyer, Kolesnikov, Weissenborn, Zhai, Unterthiner, Dehghani, Minderer, Heigold, Gelly, et~al.]{dosovitskiy2020image}
Alexey Dosovitskiy, Lucas Beyer, Alexander Kolesnikov, Dirk Weissenborn, Xiaohua Zhai, Thomas Unterthiner, Mostafa Dehghani, Matthias Minderer, Georg Heigold, Sylvain Gelly, et~al.
\newblock An image is worth 16x16 words: Transformers for image recognition at scale.
\newblock \emph{arXiv preprint arXiv:2010.11929}, 2020.

\bibitem[Ebrahimi et~al.(2020)Ebrahimi, Meier, Calandra, Darrell, and Rohrbach]{ebrahimi2020adversarial}
Sayna Ebrahimi, Franziska Meier, Roberto Calandra, Trevor Darrell, and Marcus Rohrbach.
\newblock Adversarial continual learning.
\newblock In \emph{Computer Vision--ECCV 2020: 16th European Conference, Glasgow, UK, August 23--28, 2020, Proceedings, Part XI 16}, pages 386--402. Springer, 2020.

\bibitem[Elman(1993)]{elman1993learning}
Jeffrey~L Elman.
\newblock Learning and development in neural networks: The importance of starting small.
\newblock \emph{Cognition}, 48\penalty0 (1):\penalty0 71--99, 1993.

\bibitem[Fallah et~al.(2020)Fallah, Mokhtari, and Ozdaglar]{fallah2020personalized}
Alireza Fallah, Aryan Mokhtari, and Asuman Ozdaglar.
\newblock Personalized federated learning with theoretical guarantees: A model-agnostic meta-learning approach.
\newblock \emph{Advances in Neural Information Processing Systems}, 33:\penalty0 3557--3568, 2020.

\bibitem[Feng et~al.(2023)Feng, Li, Xu, Liu, Fu, and Zuo]{feng2023learning}
Chun-Mei Feng, Bangjun Li, Xinxing Xu, Yong Liu, Huazhu Fu, and Wangmeng Zuo.
\newblock Learning federated visual prompt in null space for mri reconstruction.
\newblock In \emph{Proceedings of the IEEE/CVF Conference on Computer Vision and Pattern Recognition}, pages 8064--8073, 2023.

\bibitem[Ghosh et~al.(2020)Ghosh, Chung, Yin, and Ramchandran]{ghosh2020efficient}
Avishek Ghosh, Jichan Chung, Dong Yin, and Kannan Ramchandran.
\newblock An efficient framework for clustered federated learning.
\newblock \emph{Advances in Neural Information Processing Systems}, 33:\penalty0 19586--19597, 2020.

\bibitem[Hayes et~al.(2020)Hayes, Kafle, Shrestha, Acharya, and Kanan]{hayes2020remind}
Tyler~L Hayes, Kushal Kafle, Robik Shrestha, Manoj Acharya, and Christopher Kanan.
\newblock Remind your neural network to prevent catastrophic forgetting.
\newblock In \emph{European Conference on Computer Vision}, pages 466--483. Springer, 2020.

\bibitem[He et~al.(2020)He, Fan, Wu, Xie, and Girshick]{he2020momentum}
Kaiming He, Haoqi Fan, Yuxin Wu, Saining Xie, and Ross Girshick.
\newblock Momentum contrast for unsupervised visual representation learning.
\newblock In \emph{Proceedings of the IEEE/CVF conference on computer vision and pattern recognition}, pages 9729--9738, 2020.

\bibitem[He et~al.(2022)He, Chen, Xie, Li, Doll{\'a}r, and Girshick]{he2022masked}
Kaiming He, Xinlei Chen, Saining Xie, Yanghao Li, Piotr Doll{\'a}r, and Ross Girshick.
\newblock Masked autoencoders are scalable vision learners.
\newblock In \emph{Proceedings of the IEEE/CVF conference on computer vision and pattern recognition}, pages 16000--16009, 2022.

\bibitem[Hu et~al.()Hu, Li, and Liu]{hugeneralization}
Xiaolin Hu, Shaojie Li, and Yong Liu.
\newblock Generalization bounds for federated learning: Fast rates, unparticipating clients and unbounded losses.
\newblock In \emph{The Eleventh International Conference on Learning Representations}.

\bibitem[Jia et~al.(2022)Jia, Tang, Chen, Cardie, Belongie, Hariharan, and Lim]{jia2022visual}
Menglin Jia, Luming Tang, Bor-Chun Chen, Claire Cardie, Serge Belongie, Bharath Hariharan, and Ser-Nam Lim.
\newblock Visual prompt tuning.
\newblock In \emph{Computer Vision--ECCV 2022: 17th European Conference, Tel Aviv, Israel, October 23--27, 2022, Proceedings, Part XXXIII}, pages 709--727. Springer, 2022.

\bibitem[Karimireddy et~al.(2020{\natexlab{a}})Karimireddy, Jaggi, Kale, Mohri, Reddi, Stich, and Suresh]{karimireddy2020mime}
Sai~Praneeth Karimireddy, Martin Jaggi, Satyen Kale, Mehryar Mohri, Sashank~J Reddi, Sebastian~U Stich, and Ananda~Theertha Suresh.
\newblock Mime: Mimicking centralized stochastic algorithms in federated learning.
\newblock \emph{arXiv preprint arXiv:2008.03606}, 2020{\natexlab{a}}.

\bibitem[Karimireddy et~al.(2020{\natexlab{b}})Karimireddy, Kale, Mohri, Reddi, Stich, and Suresh]{karimireddy2020scaffold}
Sai~Praneeth Karimireddy, Satyen Kale, Mehryar Mohri, Sashank Reddi, Sebastian Stich, and Ananda~Theertha Suresh.
\newblock Scaffold: Stochastic controlled averaging for federated learning.
\newblock In \emph{International Conference on Machine Learning}, pages 5132--5143. PMLR, 2020{\natexlab{b}}.

\bibitem[Khan et~al.(2022)Khan, Naseer, Hayat, Zamir, Khan, and Shah]{khan2022transformers}
Salman Khan, Muzammal Naseer, Munawar Hayat, Syed~Waqas Zamir, Fahad~Shahbaz Khan, and Mubarak Shah.
\newblock Transformers in vision: A survey.
\newblock \emph{ACM computing surveys (CSUR)}, 54\penalty0 (10s):\penalty0 1--41, 2022.

\bibitem[Khattak et~al.(2023)Khattak, Rasheed, Maaz, Khan, and Khan]{khattak2023maple}
Muhammad~Uzair Khattak, Hanoona Rasheed, Muhammad Maaz, Salman Khan, and Fahad~Shahbaz Khan.
\newblock Maple: Multi-modal prompt learning.
\newblock In \emph{Proceedings of the IEEE/CVF Conference on Computer Vision and Pattern Recognition}, pages 19113--19122, 2023.

\bibitem[Krizhevsky et~al.(2009)Krizhevsky, Hinton, et~al.]{krizhevsky2009learning}
Alex Krizhevsky, Geoffrey Hinton, et~al.
\newblock Learning multiple layers of features from tiny images.
\newblock 2009.

\bibitem[Kulkarni et~al.(2020)Kulkarni, Kulkarni, and Pant]{kulkarni2020survey}
Viraj Kulkarni, Milind Kulkarni, and Aniruddha Pant.
\newblock Survey of personalization techniques for federated learning.
\newblock In \emph{2020 Fourth World Conference on Smart Trends in Systems, Security and Sustainability (WorldS4)}, pages 794--797. IEEE, 2020.

\bibitem[Kumar et~al.(2022)Kumar, Raghunathan, Jones, Ma, and Liang]{kumar2022fine}
Ananya Kumar, Aditi Raghunathan, Robbie Jones, Tengyu Ma, and Percy Liang.
\newblock Fine-tuning can distort pretrained features and underperform out-of-distribution.
\newblock \emph{arXiv preprint arXiv:2202.10054}, 2022.

\bibitem[Lester et~al.(2021)Lester, Al-Rfou, and Constant]{lester2021power}
Brian Lester, Rami Al-Rfou, and Noah Constant.
\newblock The power of scale for parameter-efficient prompt tuning.
\newblock \emph{arXiv preprint arXiv:2104.08691}, 2021.

\bibitem[Li et~al.(2022)Li, Cai, Wang, Tang, Ding, Lin, and Shi]{li2022fedtp}
Hongxia Li, Zhongyi Cai, Jingya Wang, Jiangnan Tang, Weiping Ding, Chin-Teng Lin, and Ye Shi.
\newblock Fedtp: Federated learning by transformer personalization.
\newblock \emph{arXiv preprint arXiv:2211.01572}, 2022.

\bibitem[Li et~al.(2021{\natexlab{a}})Li, He, and Song]{li2021model}
Qinbin Li, Bingsheng He, and Dawn Song.
\newblock Model-contrastive federated learning.
\newblock In \emph{Proceedings of the IEEE/CVF conference on computer vision and pattern recognition}, pages 10713--10722, 2021{\natexlab{a}}.

\bibitem[Li et~al.(2020)Li, Sahu, Zaheer, Sanjabi, Talwalkar, and Smith]{li2020federated}
Tian Li, Anit~Kumar Sahu, Manzil Zaheer, Maziar Sanjabi, Ameet Talwalkar, and Virginia Smith.
\newblock Federated optimization in heterogeneous networks.
\newblock \emph{Proceedings of Machine learning and systems}, 2:\penalty0 429--450, 2020.

\bibitem[Li et~al.(2021{\natexlab{b}})Li, Jiang, Zhang, Kamp, and Dou]{li2021fedbn}
Xiaoxiao Li, Meirui Jiang, Xiaofei Zhang, Michael Kamp, and Qi Dou.
\newblock Fedbn: Federated learning on non-iid features via local batch normalization.
\newblock \emph{arXiv preprint arXiv:2102.07623}, 2021{\natexlab{b}}.

\bibitem[Li et~al.(2023)Li, Quan, Zhu, and Yang]{li2023efficient}
Yaowei Li, Ruijie Quan, Linchao Zhu, and Yi Yang.
\newblock Efficient multimodal fusion via interactive prompting.
\newblock In \emph{Proceedings of the IEEE/CVF Conference on Computer Vision and Pattern Recognition}, pages 2604--2613, 2023.

\bibitem[Liu et~al.(2023)Liu, Yuan, Fu, Jiang, Hayashi, and Neubig]{liu2023pre}
Pengfei Liu, Weizhe Yuan, Jinlan Fu, Zhengbao Jiang, Hiroaki Hayashi, and Graham Neubig.
\newblock Pre-train, prompt, and predict: A systematic survey of prompting methods in natural language processing.
\newblock \emph{ACM Computing Surveys}, 55\penalty0 (9):\penalty0 1--35, 2023.

\bibitem[Mahon and Lukasiewicz(2023)]{mahon2023hard}
Louis Mahon and Thomas Lukasiewicz.
\newblock Hard regularization to prevent collapse in online deep clustering without data augmentation.
\newblock \emph{arXiv preprint arXiv:2303.16521}, 2023.

\bibitem[Mansour et~al.(2020)Mansour, Mohri, Ro, and Suresh]{mansour2020three}
Yishay Mansour, Mehryar Mohri, Jae Ro, and Ananda~Theertha Suresh.
\newblock Three approaches for personalization with applications to federated learning.
\newblock \emph{arXiv preprint arXiv:2002.10619}, 2020.

\bibitem[Marfoq et~al.(2021)Marfoq, Neglia, Bellet, Kameni, and Vidal]{marfoq2021federated}
Othmane Marfoq, Giovanni Neglia, Aur{\'e}lien Bellet, Laetitia Kameni, and Richard Vidal.
\newblock Federated multi-task learning under a mixture of distributions.
\newblock \emph{Advances in Neural Information Processing Systems}, 34:\penalty0 15434--15447, 2021.

\bibitem[McMahan et~al.(2017)McMahan, Moore, Ramage, Hampson, and y~Arcas]{mcmahan2017communication}
Brendan McMahan, Eider Moore, Daniel Ramage, Seth Hampson, and Blaise~Aguera y Arcas.
\newblock Communication-efficient learning of deep networks from decentralized data.
\newblock In \emph{Artificial intelligence and statistics}, pages 1273--1282. PMLR, 2017.

\bibitem[Oh et~al.(2021)Oh, Kim, and Yun]{oh2021fedbabu}
Jaehoon Oh, Sangmook Kim, and Se-Young Yun.
\newblock Fedbabu: Towards enhanced representation for federated image classification.
\newblock \emph{arXiv preprint arXiv:2106.06042}, 2021.

\bibitem[Ostapenko et~al.(2022)Ostapenko, Lesort, Rodriguez, Arefin, Douillard, Rish, and Charlin]{ostapenko2022continual}
Oleksiy Ostapenko, Timothee Lesort, Pau Rodriguez, Md~Rifat Arefin, Arthur Douillard, Irina Rish, and Laurent Charlin.
\newblock Continual learning with foundation models: An empirical study of latent replay.
\newblock In \emph{Conference on Lifelong Learning Agents}, pages 60--91. PMLR, 2022.

\bibitem[Peng et~al.(2019)Peng, Bai, Xia, Huang, Saenko, and Wang]{peng2019moment}
Xingchao Peng, Qinxun Bai, Xide Xia, Zijun Huang, Kate Saenko, and Bo Wang.
\newblock Moment matching for multi-source domain adaptation.
\newblock In \emph{Proceedings of the IEEE/CVF international conference on computer vision}, pages 1406--1415, 2019.

\bibitem[Qu et~al.(2022)Qu, Zhou, Liang, Xia, Wang, Adeli, Fei-Fei, and Rubin]{qu2022rethinking}
Liangqiong Qu, Yuyin Zhou, Paul~Pu Liang, Yingda Xia, Feifei Wang, Ehsan Adeli, Li Fei-Fei, and Daniel Rubin.
\newblock Rethinking architecture design for tackling data heterogeneity in federated learning.
\newblock In \emph{Proceedings of the IEEE/CVF Conference on Computer Vision and Pattern Recognition}, pages 10061--10071, 2022.

\bibitem[Raghu et~al.(2021)Raghu, Unterthiner, Kornblith, Zhang, and Dosovitskiy]{raghu2021vision}
Maithra Raghu, Thomas Unterthiner, Simon Kornblith, Chiyuan Zhang, and Alexey Dosovitskiy.
\newblock Do vision transformers see like convolutional neural networks?
\newblock \emph{Advances in Neural Information Processing Systems}, 34:\penalty0 12116--12128, 2021.

\bibitem[Saenko et~al.(2010)Saenko, Kulis, Fritz, and Darrell]{saenko2010adapting}
Kate Saenko, Brian Kulis, Mario Fritz, and Trevor Darrell.
\newblock Adapting visual category models to new domains.
\newblock In \emph{Computer Vision--ECCV 2010: 11th European Conference on Computer Vision, Heraklion, Crete, Greece, September 5-11, 2010, Proceedings, Part IV 11}, pages 213--226. Springer, 2010.

\bibitem[Shamsian et~al.(2021)Shamsian, Navon, Fetaya, and Chechik]{shamsian2021personalized}
Aviv Shamsian, Aviv Navon, Ethan Fetaya, and Gal Chechik.
\newblock Personalized federated learning using hypernetworks.
\newblock In \emph{International Conference on Machine Learning}, pages 9489--9502. PMLR, 2021.

\bibitem[Shi et~al.(2016)Shi, Tu, Xu, and Yin]{shi2016primer}
Hao-Jun~Michael Shi, Shenyinying Tu, Yangyang Xu, and Wotao Yin.
\newblock A primer on coordinate descent algorithms.
\newblock \emph{arXiv preprint arXiv:1610.00040}, 2016.

\bibitem[Singhal et~al.(2021)Singhal, Sidahmed, Garrett, Wu, Rush, and Prakash]{singhal2021federated}
Karan Singhal, Hakim Sidahmed, Zachary Garrett, Shanshan Wu, John Rush, and Sushant Prakash.
\newblock Federated reconstruction: Partially local federated learning.
\newblock \emph{Advances in Neural Information Processing Systems}, 34:\penalty0 11220--11232, 2021.

\bibitem[Sun et~al.(2022)Sun, Mendieta, Yang, and Chen]{sun2022exploring}
Guangyu Sun, Matias Mendieta, Taojiannan Yang, and Chen Chen.
\newblock Exploring parameter-efficient fine-tuning for improving communication efficiency in federated learning.
\newblock \emph{arXiv preprint arXiv:2210.01708}, 2022.

\bibitem[Tarvainen and Valpola(2017)]{tarvainen2017mean}
Antti Tarvainen and Harri Valpola.
\newblock Mean teachers are better role models: Weight-averaged consistency targets improve semi-supervised deep learning results.
\newblock \emph{Advances in neural information processing systems}, 30, 2017.

\bibitem[Tripuraneni et~al.(2020)Tripuraneni, Jordan, and Jin]{tripuraneni2020theory}
Nilesh Tripuraneni, Michael Jordan, and Chi Jin.
\newblock On the theory of transfer learning: The importance of task diversity.
\newblock \emph{Advances in neural information processing systems}, 33:\penalty0 7852--7862, 2020.

\bibitem[Tsumoto(2009)]{tsumoto2009contingency}
Shusaku Tsumoto.
\newblock Contingency matrix theory: Statistical dependence in a contingency table.
\newblock \emph{Information Sciences}, 179\penalty0 (11):\penalty0 1615--1627, 2009.

\bibitem[Vaswani et~al.(2017)Vaswani, Shazeer, Parmar, Uszkoreit, Jones, Gomez, Kaiser, and Polosukhin]{vaswani2017attention}
Ashish Vaswani, Noam Shazeer, Niki Parmar, Jakob Uszkoreit, Llion Jones, Aidan~N Gomez, {\L}ukasz Kaiser, and Illia Polosukhin.
\newblock Attention is all you need.
\newblock \emph{Advances in neural information processing systems}, 30, 2017.

\bibitem[Wang et~al.(2020)Wang, Yurochkin, Sun, Papailiopoulos, and Khazaeni]{wang2020federated}
Hongyi Wang, Mikhail Yurochkin, Yuekai Sun, Dimitris Papailiopoulos, and Yasaman Khazaeni.
\newblock Federated learning with matched averaging.
\newblock \emph{arXiv preprint arXiv:2002.06440}, 2020.

\bibitem[Wang et~al.(2022{\natexlab{a}})Wang, Zhang, Ebrahimi, Sun, Zhang, Lee, Ren, Su, Perot, Dy, et~al.]{wang2022dualprompt}
Zifeng Wang, Zizhao Zhang, Sayna Ebrahimi, Ruoxi Sun, Han Zhang, Chen-Yu Lee, Xiaoqi Ren, Guolong Su, Vincent Perot, Jennifer Dy, et~al.
\newblock Dualprompt: Complementary prompting for rehearsal-free continual learning.
\newblock In \emph{Computer Vision--ECCV 2022: 17th European Conference, Tel Aviv, Israel, October 23--27, 2022, Proceedings, Part XXVI}, pages 631--648. Springer, 2022{\natexlab{a}}.

\bibitem[Wang et~al.(2022{\natexlab{b}})Wang, Zhang, Lee, Zhang, Sun, Ren, Su, Perot, Dy, and Pfister]{wang2022learning}
Zifeng Wang, Zizhao Zhang, Chen-Yu Lee, Han Zhang, Ruoxi Sun, Xiaoqi Ren, Guolong Su, Vincent Perot, Jennifer Dy, and Tomas Pfister.
\newblock Learning to prompt for continual learning.
\newblock In \emph{Proceedings of the IEEE/CVF Conference on Computer Vision and Pattern Recognition}, pages 139--149, 2022{\natexlab{b}}.

\bibitem[Wu et~al.(2022)Wu, Li, Charles, Xiao, Liu, Xu, and Smith]{wu2022motley}
Shanshan Wu, Tian Li, Zachary Charles, Yu Xiao, Ziyu Liu, Zheng Xu, and Virginia Smith.
\newblock Motley: Benchmarking heterogeneity and personalization in federated learning.
\newblock \emph{arXiv preprint arXiv:2206.09262}, 2022.

\bibitem[Yang et~al.(2023)Yang, Wang, and Wang]{yang2023efficient}
Fu-En Yang, Chien-Yi Wang, and Yu-Chiang~Frank Wang.
\newblock Efficient model personalization in federated learning via client-specific prompt generation.
\newblock In \emph{Proceedings of the IEEE/CVF International Conference on Computer Vision}, pages 19159--19168, 2023.

\bibitem[Yi et~al.(2023)Yi, Yu, Wang, and Liu]{yi2023fedlora}
Liping Yi, Han Yu, Gang Wang, and Xiaoguang Liu.
\newblock Fedlora: Model-heterogeneous personalized federated learning with lora tuning.
\newblock \emph{arXiv preprint arXiv:2310.13283}, 2023.

\bibitem[Yoon et~al.(2021)Yoon, Shin, Hwang, and Yang]{yoon2021fedmix}
Tehrim Yoon, Sumin Shin, Sung~Ju Hwang, and Eunho Yang.
\newblock Fedmix: Approximation of mixup under mean augmented federated learning.
\newblock \emph{arXiv preprint arXiv:2107.00233}, 2021.

\bibitem[Yu et~al.(2022)Yu, Wei, Karimireddy, Ma, and Jordan]{yu2022tct}
Yaodong Yu, Alexander Wei, Sai~Praneeth Karimireddy, Yi Ma, and Michael Jordan.
\newblock Tct: Convexifying federated learning using bootstrapped neural tangent kernels.
\newblock \emph{Advances in Neural Information Processing Systems}, 35:\penalty0 30882--30897, 2022.

\bibitem[Yurochkin et~al.(2019)Yurochkin, Agarwal, Ghosh, Greenewald, Hoang, and Khazaeni]{yurochkin2019bayesian}
Mikhail Yurochkin, Mayank Agarwal, Soumya Ghosh, Kristjan Greenewald, Nghia Hoang, and Yasaman Khazaeni.
\newblock Bayesian nonparametric federated learning of neural networks.
\newblock In \emph{International conference on machine learning}, pages 7252--7261. PMLR, 2019.

\end{thebibliography}
}

\clearpage
\appendix
\onecolumn
\noindent\textbf{Overall pipeline of Appendix.} We first include related work in Section~\ref{sec:rela}. Then the notations and algorithm details of \ours{} are explained in Section~\ref{sec:alg_d}. In Section~\ref{sec:the_d}, we introduce the proof of Theorem~\ref{th:ourtheory2} along with required lemmas. Finally, additional analysis are reported in Section~\ref{sec:add_exp} for a more comprehensive study.
\section{Related Work}\label{sec:rela}
\subsection{Generic Federated Learning}
FedAvg~\cite{mcmahan2017communication} is a standard FL algorithm that involves multiple rounds of local training and global aggregation. Many works focus on improving FedAvg through various aspects: (1) global aggregation methods~\cite{wang2020federated,yurochkin2019bayesian,chen2020fedbe}  replace the weighted average with more dedicated strategies like ensemble and distillation. (2) optimization methods~\cite{karimireddy2020scaffold,li2020federated,karimireddy2020mime} reduce the client drifts~\cite{karimireddy2020scaffold} by correcting local gradients~\cite{karimireddy2020scaffold} or employed regularization toward the global model~\cite{li2020federated} thus achieving a better convergence rate. Recently, FedPR~\cite{feng2023learning}, a prompt-tuning-based GFL method, learns client prompts in the null space of group prompts in the previous round and aggregates them into global prompts, but it may not perform well when the global prompts are not low-rank~\cite{deng2023fairness}. The disadvantage of GFL methods is they are insufficient~\cite{li2020federated} for good performance when dealing with significant data heterogeneity.
\subsection{Personalized Federated Learning}
PFL~\cite{kulkarni2020survey} learns a customized model for each client. To be specific, Fine-tuning methods~\cite{singhal2021federated,oh2021fedbabu} adjust a global or meta-trained model to adapt to local client data by fine-tuning part of the global model. Clustered FL~\cite{ghosh2020efficient,mansour2020three,caldarola2021cluster} clusters clients with similar data distribution, assuming that they can share the same optimal model, however, imposes a heavy computational burden. pFedHN~\cite{shamsian2021personalized} learns a hypernetwork at the server to aggregate clients’ model updates and produce their entire models for the next round. The disadvantage of PFL is overcoming challenges in adapting to new clients and overfitting local data. In this paper, we learn a generalized FL model that not only achieves high accuracy on the global distribution but is also capable of aligning with different local distributions without local adaptation.
\subsection{Parameter Efficient Tuning}
Parameter efficient tuning (PET)~\cite{ding2023parameter}, initially proposed for text models, enables easier access and usage of pre-trained models by reducing the memory cost needed to conduct fine-tuning due to fewer computed gradients. PETuning techniques, including methods like Adapter Tuning~\cite{li2022fedtp}, LoRA~\cite{yi2023fedlora}, Prompt-Tuning~\cite{feng2023learning}, and Head-Tuning~\cite{yang2023efficient}, freeze most parameters of pre-trained models and update only a few additional parameters or a part of the original model parameters for downstream tasks. This paper focuses on prompt tuning due to no need to modify anything inside the neural network~\cite{liu2023pre,lester2021power} and Visual Prompt Tuning (VPT)~\cite{jia2022visual} has been established as an efficient and effective PETuning method for adapting large-scale ViT models to vision tasks. Recent studies on VPT have been conducted in fields like continual learning~\cite{wang2022dualprompt,wang2022learning} and multi-modality learning~\cite{khattak2023maple,li2023efficient}. Although these advancements have shown progress in various visual tasks, prompt tuning remains predominantly limited to centralized systems. The effectiveness of prompt tuning in a distributed framework has yet to be thoroughly investigated.

\section{Algorithm details}\label{sec:alg_d}
\subsection{Notation}
For convenience, we summarize the notations used in this paper in Table~\ref{tab:Notation}. 
\\
\begin{table}[htbp]
\caption{ Main notations used in this paper.}
\begin{center}
\resizebox{0.6\columnwidth}{!}{%
\begin{tabular}{r c p{7cm} }
\toprule
\multicolumn{3}{c}{\underline{Basic Variables}} \\
$M$ & $\triangleq$ & Number of clients\\
$\mathcal{X}$ & $\triangleq$ & Input space \\
$\mathcal{Y}$ & $\triangleq$ & Label space \\
$\mathcal{D}_i$ & $\triangleq$ & Data distribution on client $i$ and $\mathcal{D}_i$ is on $\mathcal{X}\times \mathcal{Y}$\\
$N = \sum_{i=1}^{M}N_i$ & $\triangleq$ & $N_i$ is number of data samples at client $i$ and $N$ is the number of data on all clients \\
$\{x_i,y_i\} \sim \mathcal{D}_i $ & $\triangleq$ & Data sample $(x_i,y_i)$ located on participating client $i$ is made of i.i.d sampling from $\mathcal{D}_i$ \\
\multicolumn{3}{c}{\underline{Function Variables}} \\
$\ell$ & $\triangleq$ & loss function\\  
$\select$ & $\triangleq$ & Prompt Selection function\\  
$[G]$, $G$ & $\triangleq$ & Group set, number of groups $G$\\ 
$\mathcal{D}^i_g$ & $\triangleq$ & Data distribution on client $i$ from group $g$ \\
$N^i_g$ & $\triangleq$ & Number of data samples at client $i$ from group $g$ \\
$N_g = \sum_{i=1}^{M}N^i_g$ & $\triangleq$ & Number of data samples from group $g$ \\

$K$, $k_g$ & $\triangleq$ & Keys set, key of $g$-th group $|\mathcal{G}|$\\  
$P_G$ & $\triangleq$ & Weight of group prompts\\  
$P_S$ & $\triangleq$ & Weight of shared prompts\\  
$W_C$ & $\triangleq$ & Weight of classifier\\ 
$h_{\theta}$ & $\triangleq$ &  Pretrained foundation model\\ 
$cls$ & $\triangleq$ &  Classification token\\ 
$E$ & $\triangleq$ &  Image patch embeddings\\ 
$Z$ & $\triangleq$ &  Prompt features embeddings\\ 
\bottomrule
\end{tabular}
}
\end{center}
\label{tab:Notation}
\end{table}

\subsection{Training Algorithm}
We present more details in the training phase of \ours{}. We provide a detailed illustration of the training process for our proposed method in Algorithm~\ref{alg:training} based on the commonly used FedAvg~\cite{mcmahan2017communication} scheme: during each communication round, the clients engage in local training using the global model received from the server, and the server aggregates the shared parameters from the clients to update the global model. Notably, \ours{} can be combined with other FL methods. Particularly, in \ours{}'s training, the local model parameters (\ie, $P_S$, $W_C$, $K$, and $P_G$) are sent back to the server for aggregation. For model parameters, the aggregation weight of client $i$ is determined by $\alpha_i = {N_i}/{N}$. Regarding the aggregation of keys, we initially calculate the selection quantity of a group $g$ on client $i$ at round $t$ denoted as $N^{i,t}_g$. Then, the aggregation weight for a group key is computed as ${N^{i,t}_g}/{\sum_i N^{i,t}_g}$. At last, the momentum aggregation is applied to the keys and group prompts.
\subsection{Inference Algorithm}
We present more detials in the inference phase of \ours{}. The inference procedure of \ours{} in Algorithm~\ref{alg:inf}. Given a sample $x$, we first use the \select function (see Eq.~\eqref{eq:select}) to determine its group membership $g = \select(x)$. Then, the shared prompt $P_S$ and corresponding group prompts $p_g$ are inserted into the model to achieve sample-level adaptation for inference. When performing tests on new clients, the frequencies of selected group prompts can automatically adjust by \select function (shown in Fig.~\ref{fig:main} (c)), ensuring our model aligns with their local data distributions. 
\begin{algorithm}[t!]
\textbf{Server Input:} Initial weights $W = \{W_C,P_G,P_S\}$, prompt selection  module $\select$ with learnable keys $K =\{k_{g}\}_{g=1}^{G}$, number of participating clients in each round $m = \gamma\times M$, number of communication rounds $T$, client data ratio set $\{\alpha_i\}_{i=1}^{M}$, accumulated selection quantity $\{v_g^i\}_{i=1}^{M}, g\in[G]$.\\
\textbf{Client $i$'s Input:} Pre-trained Transformer $h_{\theta}$, training data   ${(x_i,y_i) \sim D_i} $ for client $i$,  learning rate $\eta$, number of local training steps $E$.
 \begin{algorithmic}[1]
 \State For $t = 1 \rightarrow T$ rounds, sample $m$ clients and execute \textbf{Procedure A} and \textbf{Procedure B} iteratively.
 \Procedure{\textbf{A}: ClientUpdate}{$i$}
 \State $W_i \leftarrow W$   \algorithmiccomment{Initialize with global model} 

 \State $K_i \leftarrow K$   \algorithmiccomment{Initialize with global keys} 
 
\State Apply Block Coordinate Descent (Algorithm~\ref{alg:BCD}) to update the parameter $W_i$ and $K_i$.
\State Count the group selection quantity $\{N_g^{i}\}_{g\in G}$
\State Send updated $W_i$, $K_i$ and $\{N_g^{i}\}_{g\in G}$ to \Call{ServerExecute}{}
\EndProcedure
\Procedure{\textbf{B}: ServerExecute}{$t$}
\State Receive local models' parameters from \Call{ClientUpdate}{}
\State  $[W^t_C,P^t_S,P^t_G] \leftarrow \sum_{i=1}^{m}\alpha_i [W_C^{i,t},P_S^{i,t},P^{i,t}_G]$ \algorithmiccomment{Parameter Aggregation} 
\For{$g = 1\rightarrow G$}

\State  $k^t_g \leftarrow \sum_{i=1}^{m}\frac{N^{i,t}_g}{\sum_{i \in [m]} N^{i,t}_g} k_g^{i,t}$ 
\algorithmiccomment{Key Aggregation} 
\State $v^t_g = v^{t-1}_g + \sum_{i=1}^{m}N_g^{i,t}$
\algorithmiccomment{Accumulate Group Number} 
\EndFor
\State Apply Momentum Aggregation Eq.~\ref{eq:sta_key}.
\State ($\hat{k}_g^{0}, \hat{p}_g^{0} = k_g^{0}, k_g^{0} $ if $t =0$)  
\State  $\hat{k}_g^t = \alpha_k \hat{k}_g^{t-1} + (1-\alpha_k) k_g^t $
 \State   $\hat{p}_g^t = \alpha_g \hat{p}_g^{t-1} + (1-\alpha_g) p_g^t , \quad g\in[G]$

\State broadcast parameters to \Call{ClientUpdate}{}
\EndProcedure
 \end{algorithmic}
 \caption{\ours{} Training Algorithm}
 \label{alg:training}
\end{algorithm}

\begin{algorithm}[t!]
\textbf{Input:} Pre-trained Transformer $h_{\theta}$ with 
$U$ layers, layers set $\mathcal{U}_S$ to insert shared prompts and layers set  $\mathcal{U}_G$ to insert group prompts, trainable weights $W = \{W_C,P_G,P_S\}$, Prompt Selection Module \select with orthogonal keys $K =\{k_{g}\}_{g=1}^{G}$, a single test sample $x_n$.
 \begin{algorithmic}[1]
\State Extract representation $h_{\theta}(x_n)$ with pre-trained model $h_{\theta}$
\State $g \leftarrow$ $ \argmax_{g\in \mathcal{G}}{\rm cos}(h_{\theta}(x_n),b_g)\ $ \algorithmiccomment{Obtain group ID} 
\State Select corresponding group prompt $p_g$
 \State Encode $x_n$ into $E_0$ \algorithmiccomment{Encode image into patch embedding (Section~\ref{sec:vpt_f})} 
 \For{$i \rightarrow U$}
     \If{$i\in  \mathcal{U}_S$}
      \State $\left[cls_1,Z^S_1,E_1\right]  =\textcolor{blue}{h_{\theta}^{(i)}}\left(\left[\textcolor{blue}{cls_0}, \textcolor{red}{P^{(i)}_S}, E_0\right]\right).$ \algorithmiccomment{Insert shared prompts } 
      \ElsIf{$i<u$}
 \State $\left[cls_i,Z^S_i,E_i\right]  =\textcolor{blue}{h_{\theta}^{(i)}}\left(\left[\textcolor{blue}{cls_{i-1}}, Z^S_{i-1}, E_{i-1}\right]\right).$
 \ElsIf{$i \in \mathcal{U}_G$}
 \State $\left[cls_u,Z^g_u,Z_u^{S},E_u\right]  =\textcolor{blue}{{h_{\theta}^{(u)}}}\left(\left[\textcolor{blue}{cls_{u-1}},\textcolor{red}{p^{(i)}_{g}},Z_{u-1}^S, E_{u-1}\right]\right).$ \algorithmiccomment{Insert group prompt} 
\Else 
 \State $\left[cls_i,Z^g_{i}, Z^{S}_i,E_i\right]  =\textcolor{blue}{h_{\theta}^{(i)}}\left(\left[\textcolor{blue}{cls_{i-1}}, Z^g_{i-1}, Z^S_{i-1}, E_{i-1}\right]\right).$
\EndIf
 \EndFor
 \State $y^* \leftarrow f(W_C)$ 
 \algorithmiccomment{Final Prediction} 

\State \textbf{return} final logits prediction $y^*$
 \end{algorithmic}
  \caption{\ours{} Inference Algorithm.}
 \label{alg:inf}
\end{algorithm}
\section{Technical details and Omitted proofs}\label{sec:the_d}
\subsection{Settings and Definitions} 
First, we formally set up some general notation: For a distribution $\Dc$ with support $ (\mathcal{X},\mathcal{Y})$ and a non-negative loss function $\ell : \mathcal{X} \times \mathcal{Y} \rightarrow \mathbb{R}^{+}$ denote the population risk of a hypothesis $h:\Xc\rightarrow\Yc$ as follows:
$$\mathcal{L}_{\Dc}(h) = \mathbb{E}_{(X,Y)\sim \Dc}[\ell (h(X),Y)]\,.$$
Let $\Hc$ represent a hypothesis class and denote the hypothesis $\hat h$ minimizing the empirical risk as
$$
\widehat{h} = \arg\min_{h\in \mathcal{H}}\mathcal{L}_{\widehat{\mathcal{D}}}(h),
$$
where we denote $\hat\Dc$ the empirical distribution of samples drawn iid from $\Dc.$ 
We will also denote $\mathfrak{R}_{\Dc,n}(\mathcal{H})$ the Rademacher complexity of the hypothesis class $\Hc$ over the distribution $\Dc$ with $n$ samples.
Furthermore, we define the distribution mismatch between two distributions $\mathcal{D}_1$ and $\mathcal{D}_2$ as 
\begin{align}\label{eq:dismatch}
\operatorname{disc}_{\mathcal{H}}\left(\mathcal{D}_1, \mathcal{D}_2\right)&=\max _{h \in \mathcal{H}}\left|\mathcal{L}_{\mathcal{D}_1}(h)-\mathcal{L}_{\mathcal{D}_2}(h)\right|. 
\end{align}

Next, we formally introduce the statistical setting of our analytical investigation of the impact of group-aware hypothesis inference in a multi-client setting. For clients $i\in[M]$, let $\Dc_i$  denote their corresponding distributions of data pairs $(X,Y)$. We assume that each local distribution $\Dc_i$ is a mixture of group-specific distributions $\Dc_g^i, g\in[G]$ for $G$. Concretely,
\begin{align}\label{eq:Dc_i}
\Dc_i=\sum_{g\in[G]}\pi_g^i\,\mathcal{D}^i_g,
\end{align}
with mixing probability vector $[\pi_1^i,\ldots, \pi_G^i]$. 
This  also induces a probability distribution $\Cc_g$ of data belonging to group $g$ as follows:
\begin{align}\label{eq:group_dis}
      \mathcal{C}_g = \sum_{i\in[M]} \pi_g^i \,\mathcal{D}^i_g.
 \end{align}
Following \cite{hugeneralization}, we refer to the distribution $\Cc_g$ as the participated client's data distribution for the $g$-th group. In our setting, the group assignment formalized above corresponds to the execution of the function $\select: \mathcal{X} \rightarrow [G]$ assigning data to different groups. 

We will also consider the empirical versions of the above distributions. Formally, let $\hat\Dc_i$ the induced local empirical distribution of client $i$ by sampling $N_g^i$ iid samples from each $\Dc_g^i$. Further, let $N_i=\sum_{g\in[G]}N_g^i$ denote the total number of samples per client and $N_g = \sum_{i=1}^{M}N^i_g$ denote the total number of data samples from group $g$. Onwards, we assume that the mixture weights in \eqref{eq:Dc_i} are set as $\pi_g^i=N_g^i/N_i$. Thus, we also define the empirical distribution $\hat\Cc_g$ of each group as $\hat\Cc_g=\sum_{i\in[M]}\pi_g^i\hat\Dc_g^i=\sum_{i\in[M]}\frac{N_g^i}{N_i}\hat\Dc_g^i$. Finally, given a set $\{h_1,\ldots,h_G\}$ of $G$ hypotheses $h_g\in\Hc,g\in[G]$ one corresponding to each group, we denote $h_\select$ the group-aware (data-point dependent) hypothesis determined by the $\select$ function. In other words, $h_\select=\{h_1,\ldots,h_G\}_{\select(x)}$ denotes a hypothesis that when acting on datapoint $x$ returns $h_{\select(x)}$, for a fixed function $\select: \mathcal{X} \rightarrow [G]$.

\subsection{Lemmas}\label{app:lem1}
In this section, we cover some technical lemmas which are useful for proving our main result. Lemma~\ref{lem:1} below splits the participated clients' distribution risk into the risks of each individual's group. 
\begin{lemma}[Split the distribution risk]\label{lem:1}
{Let fixed function $\select: \mathcal{X}\rightarrow \{1,...,G\}$. For the group-aware hypothesis $h_{\select}$ that selects among hypotheses $\{h_1,\ldots,h_G\}$, it holds for any client $i\in [M]$ that}
$$\mathcal{L}_{\mathcal{D}_i}(h_{\select}) = \sum_{g=1}^{G}\frac{N^i_g}{N_i} \mathcal{L}_{\mathcal{D}^i_g}(h_g).$$
\end{lemma}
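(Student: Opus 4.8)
The plan is to prove this as an essentially direct computation that unpacks the definitions of the group-aware hypothesis $h_{\select}$ and the mixture structure of $\mathcal{D}_i$. First I would write out the population risk $\mathcal{L}_{\mathcal{D}_i}(h_{\select})$ using its definition as an expectation over $(X,Y)\sim\mathcal{D}_i$ of the loss $\ell(h_{\select}(X),Y)$. The key observation is that because $\mathcal{D}_i=\sum_{g\in[G]}\pi_g^i\,\mathcal{D}_g^i$ with $\pi_g^i=N_g^i/N_i$, the expectation over the mixture splits additively into a weighted sum of expectations over the component distributions $\mathcal{D}_g^i$. This is simply linearity of expectation applied to a mixture: for any fixed integrable function $\phi$, $\mathbb{E}_{(X,Y)\sim\mathcal{D}_i}[\phi(X,Y)]=\sum_{g=1}^{G}\frac{N_g^i}{N_i}\,\mathbb{E}_{(X,Y)\sim\mathcal{D}_g^i}[\phi(X,Y)]$.

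The step that requires a small but genuine argument is reconciling the data-point-dependent selector $h_{\select}$ with the group decomposition. By definition $h_{\select}$ acts on a datapoint $x$ as $h_{\select(x)}$, so $\ell(h_{\select}(X),Y)=\ell(h_{\select(X)}(X),Y)$ depends on $X$ both through the selected hypothesis index and through the argument. The point I would emphasize is the compatibility assumption implicit in the theorem's hypothesis that \select ``is a data grouping method'': the function \select realizes exactly the group partition underlying the mixture, so that on the support of $\mathcal{D}_g^i$ we have $\select(X)=g$ almost surely. Under this identification, restricted to the $g$-th mixture component the selector returns the fixed hypothesis $h_g$, and hence $\mathbb{E}_{(X,Y)\sim\mathcal{D}_g^i}[\ell(h_{\select}(X),Y)]=\mathbb{E}_{(X,Y)\sim\mathcal{D}_g^i}[\ell(h_g(X),Y)]=\mathcal{L}_{\mathcal{D}_g^i}(h_g)$.

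Combining these two observations gives the claim directly:
\begin{align*}
\mathcal{L}_{\mathcal{D}_i}(h_{\select})
&=\mathbb{E}_{(X,Y)\sim\mathcal{D}_i}[\ell(h_{\select}(X),Y)]\\
&=\sum_{g=1}^{G}\frac{N_g^i}{N_i}\,\mathbb{E}_{(X,Y)\sim\mathcal{D}_g^i}[\ell(h_{\select}(X),Y)]\\
&=\sum_{g=1}^{G}\frac{N_g^i}{N_i}\,\mathcal{L}_{\mathcal{D}_g^i}(h_g).
\end{align*}

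I expect the only real obstacle to be stating the selector-compatibility condition cleanly rather than any hard estimate: one must articulate precisely that \select respects the latent group structure (i.e., the groups induced by the mixture coincide with the preimages $\select^{-1}(g)$), which is what lets the selected hypothesis be treated as the constant $h_g$ within each component. If \select were allowed to disagree with the mixture partition, the per-component restriction of $h_{\select}$ would not reduce to a single $h_g$ and the identity would fail, so I would make this assumption explicit at the start of the proof. Everything else is routine linearity of expectation and the substitution $\pi_g^i=N_g^i/N_i$.
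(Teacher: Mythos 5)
Your proof is correct and follows essentially the same route as the paper's: write $\mathcal{L}_{\mathcal{D}_i}(h_{\select})$ as an expectation and split it across the mixture components using $\mathcal{D}_i=\sum_{g\in[G]}\pi_g^i\,\mathcal{D}_g^i$ with $\pi_g^i=N_g^i/N_i$. If anything, you are more careful than the paper, whose entire proof consists of stating the definition of the population risk and asserting that ``the desired follows by recalling the decomposition'' --- the selector-compatibility condition you rightly isolate (that $\select(x)=g$ almost surely on the support of $\mathcal{D}_g^i$, so that $h_{\select}$ restricts to the constant hypothesis $h_g$ on each component) is left entirely implicit there, even though the claimed identity fails without it.
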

\begin{proof} By definition of the population risk:
\begin{align*}
    \mathcal{L}_{\mathcal{D}_i}(h_{\select})
    &= \mathbb{E}_{(x,y)\sim \mathcal{D}_i}[\ell(h_{\select},x,y)].
\end{align*}
The desired follows from this by recalling the decomposition in \eqref{eq:Dc_i} with weights mixing weights $\pi_g^i=\frac{N^i_g}{N_i}.$
\end{proof}

The next lemma is useful to derive global and local performance gap.
\begin{lemma}[Bound on the generalization error]\label{lem:gener} {Assume the loss is bounded in $[0,1]$ and fix any client $i \in [M]$. Then with probability at least $1-\delta$ over the training set,}
\begin{align}
&\sum_{g=1}^{G} \frac{N^i_g}{N_i} \min _{h_g \in H} \mathcal{L}_{\mathcal{C}_g}(h_g)
    - \sum_{g=1}^{G} \frac{N^i_g}{N_i} \min _{h_g \in H} \mathcal{L}_{\widehat{\mathcal{C}}_g}(h_g) 
\leq 2 \sqrt{\frac{\log \frac{1}{\delta}}{N_i}}+  \sum_{g=1}^G  \frac{N^i_g}{N_i} \Re_{\mathcal{C}_g, N_g}(\mathcal{H}).
\end{align}

\end{lemma}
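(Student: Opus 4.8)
The plan is to prove Lemma~\ref{lem:gener} by introducing the empirical risk minimizer of each group as an intermediate comparison point and then bounding the resulting discrepancy via a uniform convergence (Rademacher) argument, followed by a concentration step for the residual stochastic fluctuation. Concretely, for each group $g$ let $h_g^\star = \arg\min_{h\in\Hc}\mathcal{L}_{\Cc_g}(h)$ denote the population minimizer and $\widehat h_g = \arg\min_{h\in\Hc}\mathcal{L}_{\widehat\Cc_g}(h)$ the empirical minimizer. First I would bound the per-group difference $\mathcal{L}_{\Cc_g}(h_g^\star) - \mathcal{L}_{\widehat\Cc_g}(\widehat h_g)$. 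Since $\widehat h_g$ minimizes the empirical risk, we have $\mathcal{L}_{\widehat\Cc_g}(\widehat h_g) \le \mathcal{L}_{\widehat\Cc_g}(h_g^\star)$, so it suffices to control
\begin{align*}
\mathcal{L}_{\Cc_g}(h_g^\star) - \mathcal{L}_{\widehat\Cc_g}(h_g^\star) \le \sup_{h\in\Hc}\bigl(\mathcal{L}_{\Cc_g}(h) - \mathcal{L}_{\widehat\Cc_g}(h)\bigr).
\end{align*}

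The second step is to apply the standard symmetrization and Rademacher bound to the supremum. By the classical uniform-deviation result for losses bounded in $[0,1]$ over $N_g$ i.i.d.\ samples from $\Cc_g$, with probability at least $1-\delta_g$,
\begin{align*}
\sup_{h\in\Hc}\bigl(\mathcal{L}_{\Cc_g}(h) - \mathcal{L}_{\widehat\Cc_g}(h)\bigr) \le 2\,\Re_{\Cc_g, N_g}(\Hc) + \sqrt{\frac{\log(1/\delta_g)}{2 N_g}}.
\end{align*}
I would then weight each group bound by $N_g^i/N_i$ and sum over $g\in[G]$; because the left-hand side of the lemma is exactly $\sum_g \frac{N_g^i}{N_i}\bigl(\mathcal{L}_{\Cc_g}(h_g^\star) - \mathcal{L}_{\widehat\Cc_g}(\widehat h_g)\bigr)$ after identifying the minima, this yields the Rademacher term $\sum_g \frac{N_g^i}{N_i}\Re_{\Cc_g,N_g}(\Hc)$ on the right-hand side directly.

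The main obstacle, and the step requiring the most care, is consolidating the $G$ separate high-probability concentration statements into the single clean term $2\sqrt{\log(1/\delta)/N_i}$ appearing in the lemma, rather than a sum of $G$ per-group square-root terms. The natural route is a union bound over groups (splitting the failure probability, e.g.\ $\delta_g = \delta/G$ or proportional to $N_g^i$), but that would introduce a $\log G$ factor and per-group $1/\sqrt{N_g}$ scaling. To recover the stated $1/\sqrt{N_i}$ rate I expect the argument instead treats the weighted residual $\sum_g \frac{N_g^i}{N_i}\bigl(\mathcal{L}_{\Cc_g}(h_g^\star)-\mathcal{L}_{\widehat\Cc_g}(h_g^\star)\bigr)$ as a \emph{single} bounded-difference functional of all $N_i$ samples on client $i$: each sample contributes to exactly one group with weight $\frac{N_g^i}{N_i}$, so changing one sample perturbs the functional by at most $\frac{1}{N_i}$ (using the $[0,1]$ loss bound and $\frac{N_g^i}{N_i}\cdot\frac{1}{N_g^i}=\frac{1}{N_i}$). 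I would therefore apply McDiarmid's inequality once to the whole weighted sum, giving a deviation of order $\sqrt{\log(1/\delta)/N_i}$ in one shot, and fold the symmetrization/Rademacher piece in expectation. The bookkeeping of constants (the factor $2$ in front of both the square-root term and the Rademacher term, and verifying that the expected deviation is controlled by $\sum_g \frac{N_g^i}{N_i}\Re_{\Cc_g,N_g}(\Hc)$) is the delicate part; once the single-functional McDiarmid step is set up correctly, the remainder is the routine symmetrization argument combined with Lemma~\ref{lem:1}'s group-wise decomposition.
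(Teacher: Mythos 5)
Your overall strategy is the same as the paper's: reduce the difference of minima to a weighted sum of per-group uniform deviations, apply McDiarmid's inequality \emph{once} to that entire weighted functional (rather than per-group concentration plus a union bound, which you correctly identify would produce a $\log G$ factor and $1/\sqrt{N_g}$ terms), and control the expectation by symmetrization to obtain $\sum_g \frac{N_g^i}{N_i}\Re_{\Cc_g,N_g}(\Hc)$. That is exactly the paper's proof structure.

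However, your reduction step has the inequality backwards, and this is a genuine flaw. From the empirical optimality $\mathcal{L}_{\widehat{\Cc}_g}(\widehat h_g)\le \mathcal{L}_{\widehat{\Cc}_g}(h_g^\star)$ you conclude that it suffices to control $\mathcal{L}_{\Cc_g}(h_g^\star)-\mathcal{L}_{\widehat{\Cc}_g}(h_g^\star)$; but that optimality gives $\mathcal{L}_{\Cc_g}(h_g^\star)-\mathcal{L}_{\widehat{\Cc}_g}(\widehat h_g)\ \ge\ \mathcal{L}_{\Cc_g}(h_g^\star)-\mathcal{L}_{\widehat{\Cc}_g}(h_g^\star)$, i.e., the fixed-hypothesis residual is a \emph{lower} bound on the gap you want, so controlling it proves nothing. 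The correct one-line reduction uses population optimality of $h_g^\star$ evaluated at the empirical minimizer: $\mathcal{L}_{\Cc_g}(h_g^\star)-\mathcal{L}_{\widehat{\Cc}_g}(\widehat h_g)\le \mathcal{L}_{\Cc_g}(\widehat h_g)-\mathcal{L}_{\widehat{\Cc}_g}(\widehat h_g)\le \sup_{h\in\Hc}\bigl(\mathcal{L}_{\Cc_g}(h)-\mathcal{L}_{\widehat{\Cc}_g}(h)\bigr)$, which is just the elementary fact $\min_i a_i-\min_i b_i\le\max_i(a_i-b_i)$ that the paper uses as its opening step. This error also propagates into your concentration step: you propose applying McDiarmid to the weighted residual at the fixed hypotheses $h_g^\star$, but that functional has zero mean (so it would yield no Rademacher term at all) and, per the above, does not dominate the min-min gap. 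McDiarmid must instead be applied to the supremum functional $\sum_g \frac{N_g^i}{N_i}\sup_{h}\bigl(\mathcal{L}_{\Cc_g}(h)-\mathcal{L}_{\widehat{\Cc}_g}(h)\bigr)$, whose expectation is then bounded by the weighted Rademacher complexities via symmetrization --- which is what the paper does. A further bookkeeping mismatch: $\widehat{\Cc}_g$ pools group-$g$ samples from \emph{all} clients ($N_g$ of them, which is why the bound features $\Re_{\Cc_g,N_g}$ and not $\Re_{\Cc_g,N_g^i}$), so the bounded-difference accounting should run over the pooled samples, not only client $i$'s $N_i$ samples as in your $\frac{N_g^i}{N_i}\cdot\frac{1}{N_g^i}$ computation.
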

\begin{proof} 
For any set of real numbers $a_1,...,a_q$ and $b_1,...,b_1$ observe that $\min _i a_i \leq \max _i a_i$ and $\min _i b_i = -\max _i -b_i$, we get
\begin{align*}
&\min _i a_i-\min _i b_i \leq \max _i\left(a_i-b_i\right).
\end{align*}
Using this it holds that
\begin{align*}
&\sum_{g=1}^{G} \frac{N^i_g}{N_i} \min _{h_g \in H} \mathcal{L}_{\mathcal{C}_g}(h_g)
    - \sum_{g=1}^{G} \frac{N^i_g}{N_i} \min _{h_g \in H} \mathcal{L}_{\widehat{\mathcal{C}}_g}(h_g)  \leq \sum^{G}_{g=1}\frac{N_g^i}{N_i}  \max_{h_g} \left (\mathcal{L}_{\mathcal{C}_g}(h_g) - \mathcal{L}_{\hat{\mathcal{C}}_g}(h_g) \right )\,.
\end{align*}
Since the loss is bounded in $[0,1]$, changing one sample changes the above term by at most one. Thus, by the McDiarmid’s inequality, with probability at least $1-\delta$, 
\begin{align} \label{eq:mcD}
& \sum^{G}_{g=1}\frac{N_g^i}{N_i}  \max_{h_g} \left (\mathcal{L}_{\mathcal{C}_g}(h_g) - \mathcal{L}_{\hat{\mathcal{C}}_g}(h_g) \right )   \leq \frac{1}{N_i} \mathbb{E}\left[\sum^{G}_{g=1}N_g^i \max_{h_g} \left (\mathcal{L}_{\mathcal{C}_g}(h_g) - \mathcal{L}_{\hat{\mathcal{C}}_g}(h_g) \right )\right] +\sqrt{\frac{1}{N_i}\log \frac{1}{\delta}} . 
\end{align}
Moreover, note that:
\begin{align}
    \mathbb{E}\left[ \sum^{G}_{g=1}\frac{N_g^i}{N_i}\max_{h_g}  \left (\mathcal{L}_{\mathcal{C}_g}(h_g) - \mathcal{L}_{\hat{\mathcal{C}}_g}(h_g) \right )\right] \notag  & =  \sum^{G}_{g=1} \frac{ N_g^i}{N_i} \mathbb{E}\left[ \max_{h_g}  (\mathcal{L}_{\mathcal{C}_g}(h_g) - \mathcal{L}_{\hat{\mathcal{C}}_g}(h_g) )\right] \\ & \leq  2 \sum_{g=1}^G \frac{N^i_g}{N_i}  \Re_{\mathcal{C}_g, N_g}(\mathcal{H}). \label{eq:radmache}
\end{align}
Combining Eq.~\eqref{eq:mcD} and Eq.~\eqref{eq:radmache}, completes the proof.
\end{proof}
\subsection{Proof of Theorem 1}\label{app:the2}
We are now ready to prove Theorem~\ref{th:ourtheory2}.
\begin{proof} 
Given Lemma~\ref{lem:1}, we split the distribution risk. Thus, the global-to-local performance gap becomes
$$\sum_{g=1}^{G} \frac{N^i_g}{N_i}  \mathcal{L}_{\widehat{\mathcal{D}}^i_g}(\widehat{h}_g)-\min _{h \in H} \mathcal{L}_{\mathcal{D}_i}(h),$$
where $\widehat{h}_g = \arg\min_{h\in \mathcal{H}}\mathcal{L}_{\widehat{\mathcal{C}}_g}(h)$ denote the empirical model for data group $g$. Next, observe from \eqref{eq:Dc_i} that 
\begin{align}
    \min _{h \in H} \mathcal{L}_{\mathcal{D}_i}(h) &= 
    \min_{h\in\Hc}\sum_{g=1}^{G} \frac{N^i_g}{N_i}  \mathcal{L}_{\mathcal{D}_g^i}(h)\,\nn
    \\
    &\geq \sum_{g=1}^{G} \frac{N^i_g}{N_i} \min _{h_g \in H} \mathcal{L}_{\mathcal{D}_g^i}(h_g)\,.\nn
\end{align}
 Denote $h_{gi}^{\star} = \min _{h_g \in H} \mathcal{L}_{\mathcal{D}_g^i}(h_g)$, we then get the following:
\begin{align}
      \sum_{g=1}^{G} \frac{N^i_g}{N_i}  \mathcal{L}_{\widehat{\mathcal{D}}^i_g}(\widehat{h}_g)- \min _{h \in H} \mathcal{L}_{\mathcal{D}_i}(h)
    & \leq \sum_{g=1}^{G} \frac{N^i_g}{N_i} \mathcal{L}_{\widehat{\mathcal{D}}^i_g}(\widehat{h}_g)-\sum_{g=1}^{G} \frac{N^i_g}{N_i} \min _{h_g \in H} \mathcal{L}_{\mathcal{D}_g^i}(h_g) \notag \\
    & = \sum_{g=1}^{G} \frac{N^i_g}{N_i}  \mathcal{L}_{\widehat{\mathcal{D}}^i_g}(\widehat{h}_g)- \sum_{g=1}^{G} \frac{N^i_g}{N_i} \left[ \mathcal{L}_{\mathcal{C}_g}(h_{gi}^{\star}) + \mathcal{L}_{\mathcal{D}_g^i}(h_{gi}^{\star}) - \mathcal{L}_{\mathcal{C}_g}(h_{gi}^{\star})  \right]
     \notag \\
     & = \sum_{g=1}^{G} \frac{N^i_g}{N_i}  \mathcal{L}_{\widehat{\mathcal{D}}^i_g}(\widehat{h}_g)- \sum_{g=1}^{G} \frac{N^i_g}{N_i} \left[ \mathcal{L}_{\mathcal{C}_g}(h_{gi}^{\star}) + \mathcal{L}_{\mathcal{D}_g^i}(h_{gi}^{\star}) -\mathcal{L}_{\mathcal{C}_g}(h_{gi}^{\star})  \right]
     \notag \\
     &  - \sum_{g=1}^{G} \frac{N^i_g}{N_i} \mathcal{L}_{\widehat{\mathcal{C}}_g}(\widehat{h}_g)
    +  \sum_{g=1}^{G} \frac{N^i_g}{N_i} \mathcal{L}_{\widehat{\mathcal{C}}_g}(\widehat{h}_g) 
     \notag \\
    & = \sum_{g=1}^{G} \frac{N^i_g}{N_i}  \mathcal{L}_{\widehat{\mathcal{D}}^i_g}(\widehat{h}_g) 
    - \sum_{g=1}^{G} \frac{N^i_g}{N_i} \mathcal{L}_{\widehat{\mathcal{C}}_g}(\widehat{h}_g)
    \notag \\
    & 
    + \sum_{g=1}^{G} \frac{N^i_g}{N_i} \left ( \mathcal{L}_{\mathcal{C}_g}(h_{gi}^{\star}) - \mathcal{L}_{\mathcal{D}_g^i}(h_{gi}^{\star}) \right )   \notag \\
     &   +  \sum_{g=1}^{G} \frac{N^i_g}{N_i} \mathcal{L}_{\widehat{\mathcal{C}}_g}(\widehat{h}_g)  -  \sum_{g=1}^{G} \frac{N^i_g}{N_i} \mathcal{L}_{\mathcal{C}_g}(h_{gi}^{\star})
    \label{eq:c3.1}
\end{align}
Observing that $\sum_{g=1}^{G} \frac{N^i_g}{N_i} \min _{h_g \in H} \mathcal{L}_{\mathcal{C}_g}(h_{g}) \leq  \sum_{g=1}^{G} \frac{N^i_g}{N_i} \mathcal{L}_{\mathcal{C}_g}(h_{gi}^{\star})$ we get
\begin{align*}
    \eqref{eq:c3.1} & \leq  \sum_{g=1}^{G} \frac{N^i_g}{N_i} \max_{h_g \in H} \left | \mathcal{L}_{\mathcal{D}^i_g}(h_g)  
    - \mathcal{L}_{\mathcal{C}_g}(h_g) \right |  +  \sum_{g=1}^{G} \frac{N^i_g}{N_i} \max_{h_g \in H} \left |\mathcal{L}_{\widehat{\mathcal{D}}_g^i}(h_g) - \mathcal{L}_{\widehat{\mathcal{C}}_g}(h_g)  
    \right |
    \notag\\
    &+  \sum_{g=1}^{G} \frac{N^i_g}{N_i} \min _{h_g \in H} \mathcal{L}_{\widehat{\mathcal{C}}_g}(h_g) 
    - \sum_{g=1}^{G} \frac{N^i_g}{N_i} \min _{h_g \in H} \mathcal{L}_{\mathcal{C}_g}(h_g)\,.
   \notag \\
\end{align*}
Then, combining lemma~\ref{lem:gener}, absolute homogeneity of Rademacher complexity, and the definition of the discrepancy in Eq.~\eqref{eq:dismatch}, we will have with probability $1-\delta$,
\begin{align}
      \sum_{g=1}^{G} \frac{N^i_g}{N_i}  \mathcal{L}_{\widehat{\mathcal{D}}^i_g}(\widehat{h}_g)- \min _{h \in H} \mathcal{L}_{\mathcal{D}_i}(h)  \leq \sqrt{\frac{\log \frac{1}{\delta}}{N_i}}+ 2 \sum_{g=1}^G \frac{N^i_g}{N_i} \Re_{\mathcal{C}_g, N_g}(\mathcal{H}) +  \sum_{g=1}^{G} \frac{N^i_g}{N_i} \left(\operatorname{disc}(\mathcal{D}_g^i, \mathcal{C}_{g}) + \operatorname{disc}(\widehat{\mathcal{D}}_g^i, \widehat{\mathcal{C}}_{g}) \right)
\end{align}
When the VC dimension of Hypothesis class $\mathcal{H}$ is $d$, then we can obtain:
\begin{align}
     \sum_{g=1}^{G} \frac{N^i_g}{N_i}  \mathcal{L}_{\widehat{\mathcal{D}}^i_g}(\widehat{h}_g)- \min _{h \in H} \mathcal{L}_{\mathcal{D}_i}(h) \leq \sqrt{\frac{\log \frac{1}{\delta}}{N_i}}+ 2 \sum_{g=1}^G \frac{N^i_g}{N_i} 
\sqrt{\frac{2d}{N_g}log(\frac{eN_g}{d})}  +  \sum_{g=1}^{G} \frac{N^i_g}{N_i} \left(\operatorname{disc}(\mathcal{D}_g^i, \mathcal{C}_{g}) + \operatorname{disc}(\widehat{\mathcal{D}}_g^i, \widehat{\mathcal{C}}_{g}) \right)
\end{align}
This completes the proof of Theorem~\ref{th:ourtheory2}. 

\end{proof}
\subsection{Performance Gap on Real Distribution}\label{app:the_g}
In this section, we follow the idea in ~\cite{bao2023optimizing} to give the gap between the population loss of the global model found by empirical loss minimization using the \select grouping function and the population loss of the optimal model of client $i$. Different from ~\cite{bao2023optimizing} that focus on clustering clients, here we focus on clustering data into groups: $\mathcal{L}_{\mathcal{D}_i}(\widehat{h}_{\select}) -\min _{h \in H} \mathcal{L}_{\mathcal{D}_i}(h)$. Different from theirs based on clustering clients into non-overlapping coalitions, we focus on learning parameters for each data group, allowing each client to benefit from knowledge distilled from all other clients’ datasets~\cite{marfoq2021federated}.  
We present the theorem below:
\begin{theorem} Assume the loss function $\ell$ is bounded in $[0,1]$ and the function $\select$ is a data grouping method. Let $\mathfrak{R}_{\mathcal{D},m}(\mathcal{H})$ represent the Rademacher complexity of the hypothesis class $\mathcal{H}$ over the distribution $\mathcal{D}$ with $m$ samples. Then, with a probability of at least $1-\delta$ over the training set,
\begin{align}
\mathcal{L}_{\mathcal{D}_i}(\widehat{h}_{\select})   -\min _{h \in H} \mathcal{L}_{\mathcal{D}_i}(h)
 \leq  2\sqrt{\frac{\log \frac{1}{\delta}}{N_i}} & + 4\sum_{i=1}^G \frac{N^i_g}{N_i}  \Re_{\mathcal{C}_g, N_g}(\mathcal{H}) 
    +  \sum_{g=1}^{G} \frac{N^i_g}{N_i} \left(2\operatorname{disc}(\mathcal{D}_g^i, \mathcal{C}_{g}) \right), 
\end{align}  
where $\operatorname{disc}_{\mathcal{H}}\left(\mathcal{D}_1, \mathcal{D}_2\right)=\max _{h \in \mathcal{H}}\left|\mathcal{L}_{\mathcal{D}_1}(h)-\mathcal{L}_{\mathcal{D}_2}(h)\right|$.
\end{theorem}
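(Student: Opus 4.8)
The plan is to mirror the proof of Theorem~\ref{th:ourtheory2}, the only structural change being that the left-hand side now carries the \emph{population} risk $\mathcal{L}_{\mathcal{D}_i}(\widehat{h}_{\select})$ rather than the empirical risk. I would begin exactly as before: apply Lemma~\ref{lem:1} to split $\mathcal{L}_{\mathcal{D}_i}(\widehat{h}_{\select})=\sum_{g=1}^{G}\frac{N_g^i}{N_i}\mathcal{L}_{\mathcal{D}_g^i}(\widehat{h}_g)$, and lower bound the optimal term using the mixture decomposition \eqref{eq:Dc_i}, namely $\min_{h\in\mathcal{H}}\mathcal{L}_{\mathcal{D}_i}(h)=\min_{h\in\mathcal{H}}\sum_{g}\frac{N_g^i}{N_i}\mathcal{L}_{\mathcal{D}_g^i}(h)\geq\sum_{g}\frac{N_g^i}{N_i}\min_{h_g\in\mathcal{H}}\mathcal{L}_{\mathcal{D}_g^i}(h_g)$. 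Writing $h_{gi}^{\star}=\arg\min_{h_g\in\mathcal{H}}\mathcal{L}_{\mathcal{D}_g^i}(h_g)$, the gap is then at most $\sum_{g}\frac{N_g^i}{N_i}\big(\mathcal{L}_{\mathcal{D}_g^i}(\widehat{h}_g)-\mathcal{L}_{\mathcal{D}_g^i}(h_{gi}^{\star})\big)$.

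Next, for each group I would insert the participated-client distribution $\mathcal{C}_g$ by the add-and-subtract decomposition
\begin{align*}
\mathcal{L}_{\mathcal{D}_g^i}(\widehat{h}_g)-\mathcal{L}_{\mathcal{D}_g^i}(h_{gi}^{\star})
&=\big(\mathcal{L}_{\mathcal{D}_g^i}(\widehat{h}_g)-\mathcal{L}_{\mathcal{C}_g}(\widehat{h}_g)\big)
+\big(\mathcal{L}_{\mathcal{C}_g}(\widehat{h}_g)-\mathcal{L}_{\mathcal{C}_g}(h_{gi}^{\star})\big)\\
&\quad+\big(\mathcal{L}_{\mathcal{C}_g}(h_{gi}^{\star})-\mathcal{L}_{\mathcal{D}_g^i}(h_{gi}^{\star})\big).
\end{align*}
The first and third brackets are each bounded by $\operatorname{disc}(\mathcal{D}_g^i,\mathcal{C}_g)$ via \eqref{eq:dismatch}, which is precisely the source of the $2\operatorname{disc}(\mathcal{D}_g^i,\mathcal{C}_g)$ term. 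This is the key divergence from Theorem~\ref{th:ourtheory2}: because the left side is measured on the true group distribution $\mathcal{D}_g^i$ (not its empirical counterpart $\widehat{\mathcal{D}}_g^i$), \emph{both} discrepancies are population discrepancies, so the empirical term $\operatorname{disc}(\widehat{\mathcal{D}}_g^i,\widehat{\mathcal{C}}_g)$ of Theorem~\ref{th:ourtheory2} is replaced by a second copy of $\operatorname{disc}(\mathcal{D}_g^i,\mathcal{C}_g)$.

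For the middle bracket I would first use $\mathcal{L}_{\mathcal{C}_g}(h_{gi}^{\star})\geq\min_{h_g\in\mathcal{H}}\mathcal{L}_{\mathcal{C}_g}(h_g)$, so that the term is at most the excess risk of the ERM $\widehat{h}_g=\arg\min_h\mathcal{L}_{\widehat{\mathcal{C}}_g}(h)$ on the population $\mathcal{C}_g$. Since $\widehat{h}_g$ minimizes the empirical risk over $N_g$ i.i.d. samples from $\mathcal{C}_g$, the standard two-sided uniform-convergence argument gives $\mathcal{L}_{\mathcal{C}_g}(\widehat{h}_g)-\min_{h_g}\mathcal{L}_{\mathcal{C}_g}(h_g)\leq 2\sup_{h\in\mathcal{H}}|\mathcal{L}_{\mathcal{C}_g}(h)-\mathcal{L}_{\widehat{\mathcal{C}}_g}(h)|$. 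Summing over $g$ with weights $\frac{N_g^i}{N_i}$, applying McDiarmid's inequality to the weighted supremum (now two-sided, which doubles the concentration term to $2\sqrt{\log(1/\delta)/N_i}$ relative to Theorem~\ref{th:ourtheory2}) and then symmetrization to replace the expected supremum with $\Re_{\mathcal{C}_g,N_g}(\mathcal{H})$ yields the coefficient $4$ (two from the excess-risk bound, two from symmetrization). Combining the three contributions gives the stated bound.

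I expect the concentration step to be the main obstacle. Unlike Theorem~\ref{th:ourtheory2}, whose left-hand side already sits on the empirical distribution and therefore needs only a one-sided deviation bound, here a genuine two-sided excess-risk guarantee for each group's ERM is required. The delicate bookkeeping is to apply the bounded-differences (McDiarmid) inequality correctly to the \emph{weighted} sum $\sum_g\frac{N_g^i}{N_i}\sup_{h}|\mathcal{L}_{\mathcal{C}_g}(h)-\mathcal{L}_{\widehat{\mathcal{C}}_g}(h)|$ --- noting that each of the $N_i$ local samples belongs to exactly one group and perturbs only the corresponding $\widehat{\mathcal{C}}_g$ --- and to track how the factors of two accumulate so that the coefficients become $2,4,2$ in place of the $1,2,1$ of Theorem~\ref{th:ourtheory2}.
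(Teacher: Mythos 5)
Your proposal is correct and matches the paper's own proof in all essentials: Lemma~\ref{lem:1} to split the risk, adding and subtracting $\mathcal{L}_{\mathcal{C}_g}$ at both $\widehat{h}_g$ and $h_{gi}^{\star}$ so that \emph{both} discrepancy terms are population ones, the standard ERM excess-risk argument contributing the factor $2$ on the uniform deviation (the paper routes this through $\mathcal{L}_{\widehat{\mathcal{C}}_g}(\widehat{h}_g)\le\mathcal{L}_{\widehat{\mathcal{C}}_g}(h_{gi}^{\star})$ while you compare against $\min_{h}\mathcal{L}_{\mathcal{C}_g}(h)$, which is the same argument), and the McDiarmid-plus-symmetrization machinery of Lemma~\ref{lem:gener} yielding the final constants $2$, $4$, $2$. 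One small correction of phrasing only: the doubling to $2\sqrt{\log(1/\delta)/N_i}$ comes from the coefficient $2$ that the excess-risk bound places in front of the sup-deviation term (the same source as the extra factor $2$ on the Rademacher term), not from McDiarmid itself being applied in a ``two-sided'' fashion --- but your final accounting of the constants is correct.
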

The obtained theorem also suggests that tuning $G$ is important in achieving optimal performance, which agrees with theorem~\ref{th:ourtheory2}.
\begin{proof}
\begin{align}
\mathcal{L}_{\mathcal{D}^i}(\widehat{h}_g)   &   \leq \sum_{g=1}^{G} \frac{N^i_g}{N_i} \mathcal{L}_{\mathcal{C}_g}(\widehat{h}_g) + \mathcal{L}_{\mathcal{D}^i}(\widehat{h}_g) -  \sum_{g=1}^{G} \frac{N^i_g}{N_i} \mathcal{L}_{\mathcal{C}_g}(\widehat{h}_g)   \notag \\
 & = \sum_{g=1}^{G} \frac{N^i_g}{N_i} \mathcal{L}_{\mathcal{C}_g}(\widehat{h}_g) + \sum_{g=1}^{G} \frac{N^i_g}{N_i}  \mathcal{L}_{\mathcal{D}^i_g}(\widehat{h}_g)  \notag -  \sum_{g=1}^{G} \frac{N^i_g}{N_i} \mathcal{L}_{\mathcal{C}_g}(\widehat{h}_g)  \notag \\  
  & = \sum_{g=1}^{G} \frac{N^i_g}{N_i} \mathcal{L}_{\mathcal{C}_g}(\widehat{h}_g) + \sum_{g=1}^{G} \frac{N^i_g}{N_i} \operatorname{disc}(\mathcal{D}_g^i, \mathcal{C}_{g}) \notag \\
   & \leq \sum_{g=1}^{G} \frac{N^i_g}{N_i} \mathcal{L}_{\widehat{\mathcal{C}}_g}(\widehat{h}_g) + \sum_{g=1}^{G} \frac{N^i_g}{N_i} \operatorname{disc}(\mathcal{D}_g^i, \mathcal{C}_{g})    + \sum_{g=1}^{G} \frac{N^i_g}{N_i} \left( \mathcal{L}_{\mathcal{C}_g}(\widehat{h}_g)  - \mathcal{L}_{\widehat{\mathcal{C}}_g}(\widehat{h}_g) \right)  \notag \\ 
    & \leq  \sum_{g=1}^{G} \frac{N^i_g}{N_i} \mathcal{L}_{\widehat{\mathcal{C}}_g}(\widehat{h}_g) + \sum_{g=1}^{G} \frac{N^i_g}{N_i} \operatorname{disc}(\mathcal{D}_g^i, \mathcal{C}_{g})     +  \sum^{G}_{g=1}\frac{N_g^i}{N_i}  \max_{h_g} \left (\mathcal{L}_{\mathcal{C}_g}(h_g) - \mathcal{L}_{\hat{\mathcal{C}}_g}(h_g) \right ) \notag \\ 
    & \leq  \sum_{g=1}^{G} \frac{N^i_g}{N_i} \mathcal{L}_{\mathcal{C}_g}(h^{\star}_{gi}) + \sum_{g=1}^{G} \frac{N^i_g}{N_i} \operatorname{disc}(\mathcal{D}_g^i, \mathcal{C}_{g}) \notag \\ 
    &+   \left |  \sum_{g=1}^{G} \frac{N^i_g}{N_i} \left( \mathcal{L}_{\widehat{\mathcal{C}}_g}(h^{\star}_{gi}) - \mathcal{L}_{\mathcal{C}_g}(h^{\star}_{gi}) \right ) \right |   +  \sum^{G}_{g=1}\frac{N_g^i}{N_i}  \max_{h_g} \left (\mathcal{L}_{\mathcal{C}_g}(h_g) - \mathcal{L}_{\hat{\mathcal{C}}_g}(h_g) \right )   \notag \\ 
     & \leq   \sum_{g=1}^{G} \frac{N^i_g}{N_i} \mathcal{L}_{\mathcal{C}_g}(h^{\star}_{gi}) + \sum_{g=1}^{G} \frac{N^i_g}{N_i} \operatorname{disc}(\mathcal{D}_g^i, \mathcal{C}_{g})    + 2 \sum^{G}_{g=1}\frac{N_g^i}{N_i}  \max_{h_g} \left (\mathcal{L}_{\mathcal{C}_g}(h_g) - \mathcal{L}_{\hat{\mathcal{C}}_g}(h_g) \right )   \notag \\
      & \leq  \sum_{g=1}^{G} \frac{N^i_g}{N_i} \mathcal{L}_{\mathcal{D}^i_g}(h^{\star}_{gi}) + \sum_{g=1}^{G} \frac{N^i_g}{N_i} \operatorname{disc}(\mathcal{D}_g^i, \mathcal{C}_{g})  +  2 \sum^{G}_{g=1}\frac{N_g^i}{N_i}  \max_{h_g} \left (\mathcal{L}_{\mathcal{C}_g}(h_g) - \mathcal{L}_{\hat{\mathcal{C}}_g}(h_g) \right )  \notag \\
      &   + \sum_{g=1}^{G} \frac{N^i_g}{N_i}  \left ( \mathcal{L}_{\mathcal{C}_g}(h^{\star}_{gi}) -\mathcal{L}_{\mathcal{D}^i_g}(h^{\star}_{gi}) \right )  \notag \\
       & \leq \sum_{g=1}^{G} \frac{N^i_g}{N_i} \mathcal{L}_{\mathcal{D}^i_g}(h^{\star}_{gi})   + 2 \sum_{g=1}^{G} \frac{N^i_g}{N_i} \operatorname{disc}(\mathcal{D}_g^i, \mathcal{C}_{g})  +  2 \sum^{G}_{g=1}\frac{N_g^i}{N_i}  \max_{h_g} \left (\mathcal{L}_{\mathcal{C}_g}(h_g) - \mathcal{L}_{\hat{\mathcal{C}}_g}(h_g) \right )
    \label{eq:c3.1}
\end{align}
Then, combining lemma~\ref{lem:gener}, absolute homogeneity of Rademacher complexity, and the definition of the discrepancy in Eq.~\eqref{eq:dismatch}, we will have
\begin{align}
     &  \sum_{g=1}^{G} \frac{N^i_g}{N_i}  \mathcal{L}_{\mathcal{D}^i_g}(\widehat{h}_g)- \min _{h \in H} \mathcal{L}_{\mathcal{D}_i}(h)  \leq 2\sqrt{\frac{\log \frac{1}{\delta}}{N_i}}+ 4 \sum_{g=1}^G \frac{N^i_g}{N_i} \Re_{\mathcal{C}_g, N_g}(\mathcal{H}) +  2 \sum_{g=1}^{G} \frac{N^i_g}{N_i}  \operatorname{disc}(\mathcal{D}_g^i, \mathcal{C}_{g}) \notag
\end{align}
\end{proof}

\section{Additional Analysis}\label{sec:add_exp}

\subsection{Feature T-SNE map of pre-train model}
In this section, we examine features outputted from various layers of a pre-trained ViT model. As illustrated in Fig.~\ref{fig:early}, features from different classes processed by the early layers of a pre-trained ViT are uniformly distributed on the manifold, indicating shared information across classes. In later layers, the features become more specialized and cluster together, thereby introducing higher heterogeneity in FL. As a result, it validates our motivation in introducing shared prompts into lower layers for common information and group prompts into higher layers for specialized information (Section~\ref{sec:arc}).
\begin{figure}[t!]
    \centering
\includegraphics[width=0.9\linewidth]{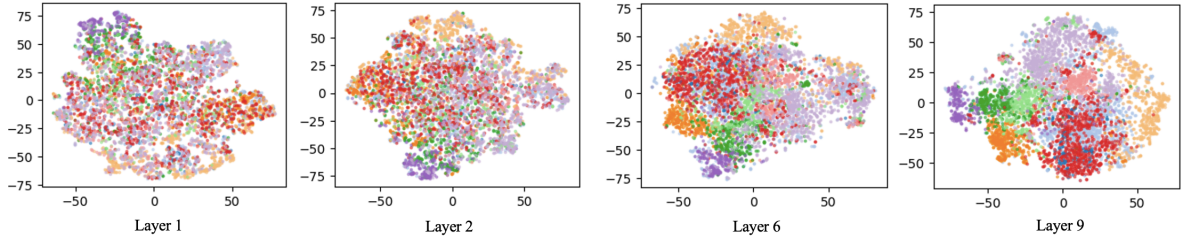}
    \caption{T-SNE maps of CIFAR-100 data features processed by the different layers of the ImageNet-21K pre-trained ViT-16/B model. Data from different coarse classes are labeled with different colors.}\label{fig:early}
    \vspace{-3mm}
\end{figure}
\subsection{The influence of Momentum Ratio}
In this section, we perform an ablation study on the two momentum ratios in Eq.~\ref{eq:sta_key}. We use the CIFAR-100 dataset with $s=10$ as a case study.

\noindent\textbf{Key Momentum Ratio.} We conduct an analysis of the key momentum ratio and begin by applying centralized K-Means to the training data to generate cluster labels for the data. Subsequently, we assess the congruence between the groups learned through our distributed approach and the clusters identified by K-Means. To be specific, we first calculate the normalized contingency matrix~\cite{tsumoto2009contingency} between cluster results from centralized K-Means and \ours{} and obtain the overlapping ratio   $Acc_{overlap}$ by summing the maximum values of each row in this matrix. Then we evaluate the quality of the centralized K-Means result, $Q_{kmeans}$, by calculating the ratio of data from the same class clustered into the same group. Finally, we calculate $\frac{Acc_{overlap}}{Q_{kmeans}}$ to obtain the congruence score. Fig.~\ref{fig:key} demonstrates that \select can match the performance of centralized K-Means. Notably, with a momentum setting of $\alpha_k=0.5$, it achieved its highest congruence score at $86.8\%$.

\noindent\textbf{Group Momentum Ratio.} Based on the optimal key momentum ratio, we study the influence of group momentum ratio $\alpha_g$ on both the global accuracy and the worst local accuracy. Fig.~\ref{fig:groupm} illustrates that an increase in $\alpha_g$ enhances the worst local accuracy, as a higher $\alpha_g$ incorporates more knowledge from previous rounds. As to the global accuracy, initially, it improves attributing to enhanced stability, but it starts to decline when
$\alpha_g$ exceeds 0.5, indicating an over-rely on information from previous rounds. Therefore, the optimal group momentum ratio is $\alpha_k=0.5$. We also report the best baseline performances with two vertical dashed lines, \ours{} can outperform baselines across all momentum ratios.

\noindent Consequently, without further declarations, we set the momentum ratio at 0.5 for both the key $\alpha_k$ and group prompt $\alpha_g$.
\begin{figure}[t!]
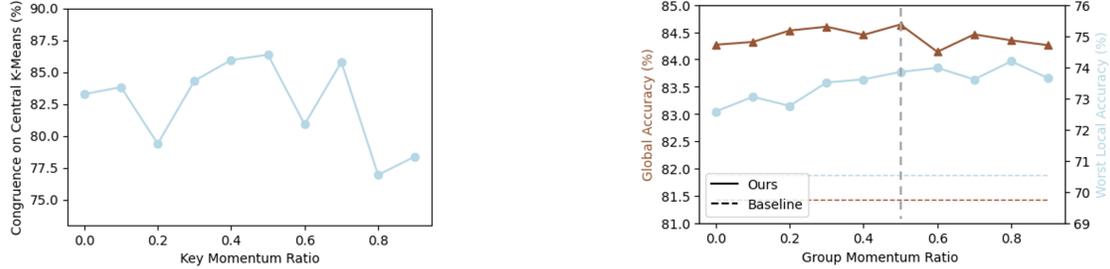

    \centering
    \begin{subfigure}{.45\textwidth}
  \centering
  \includegraphics[width=0.75\linewidth]{image/keymm.pdf}
  \caption{Influence of key momentum ratio $\alpha_k$: We assess the congruence between the groups distributively learned by \select and those identified by centralized K-Means. Optimal congruence is observed at a key momentum ratio of $\alpha_k=0.5$.} 
  \label{fig:key}
\end{subfigure}
 \qquad
\begin{subfigure}{.45\textwidth}
  \centering
  \includegraphics[width=0.8\linewidth]{image/groupmm.pdf}
  \caption{Influence of group momentum ratio $\alpha_g$: We study the influence of the group momentum ratio $\alpha_g$ on global accuracy and worst local accuracy. We highlight the sweet spot using a horizontal dashed line and the best baseline performance with two vertical dashed lines.}
  \label{fig:groupm}
\end{subfigure}%
\caption{Ablation study on key momentum ratio $\alpha_k$ and group momentum ratio $\alpha_g$.}
\vspace{-4mm}
\end{figure}

\section{Description of Heterogeneity}
In this section, we describe the details of label heterogeneity and feature heterogeneity settings and provide examples of them. 
\subsection{Label Heterogeneity}
For CIFAR-100, we follow~\cite{oh2021fedbabu,li2022fedtp} to apply the “Pathological Partition”, where each client is randomly assigned $s$ classes.
The sample rate on client i of selected class $s$ is obtained by $a_{i,c}/\sum_ja_{j,c}$, where $a_{i,c} \sim U(.4,.6)$.  Considering that $s$ equals 10, we illustrate the data distribution for four clients out of a total of 100 in Fig.~\ref{fig:path}. As depicted, $a_{i,c}$ affects the data size of a class and introduces class imbalance within a client. Simultaneously, each client possesses 10 classes, the combination varies across clients, thereby introducing label heterogeneity. As $s$ decreases, the variety of classes available to each client becomes limited, resulting in a restricted label distribution for each client and an increase in the number of samples per class.
\begin{figure}[t!]
    \centering
\includegraphics[width=0.85\linewidth]{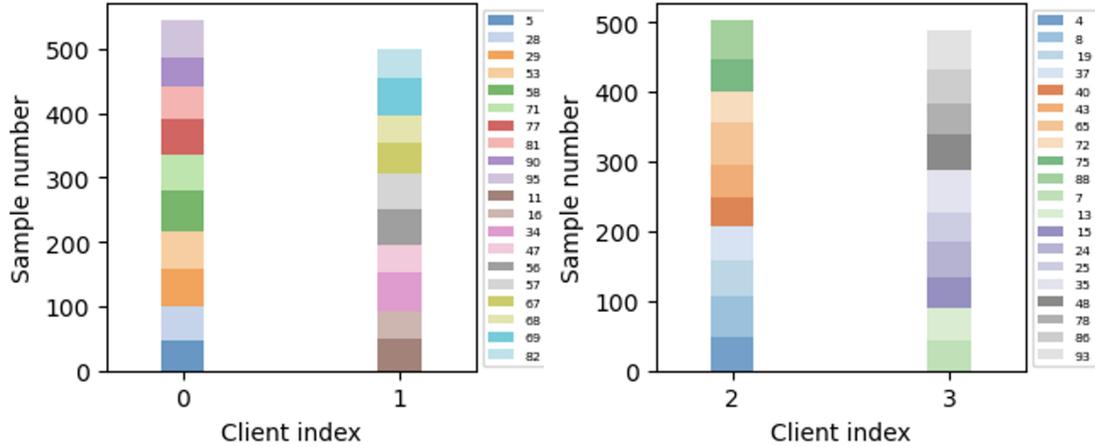}
    \caption{Examples of data distribution with $s=10$ for four clients are presented. Due to the large number of classes (100 classes in CIFAR-100), every two clients are plotted on the same figure, with different colors indicating different classes.}\label{fig:path}
    \vspace{-3mm}
\end{figure}

\noindent Regarding the Five Datasets, we allocate the data and ensure each client receives data from one dataset. Figure~\ref{fig:five} presents image samples from five clients with each client representing one of the five datasets. This approach introduces label heterogeneity due to the unique nature of each dataset.
\subsection{Feature Heterogneity}
For conducting clients with \textit{feature heterogeneity}, we follow the methodologies outlined in the latest benchmark for feature heterogeneity~\cite{yang2023efficient} as well as in the widely-referenced paper~\cite{li2021fedbn}. According to these sources, we assign a data domain to each client, with the total number of clients ($M$) set to 4 for Office-Caltech10 and 6 for DomainNet respectively. Image examples from these datasets are displayed in Fig.\ref{fig:domain} and Fig.\ref{fig:off} for DomainNet and Office-Caltech10, respectively. As illustrated, different clients receive data from the same classes but sourced from various domains, thereby introducing feature heterogeneity.
\begin{figure}[t!]
    \centering
\includegraphics[width=0.6\linewidth]{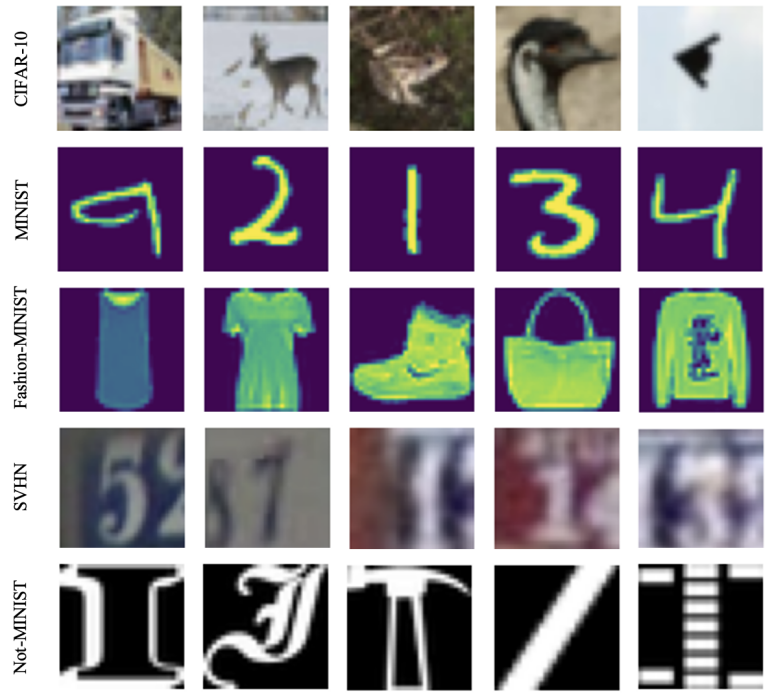}
    \caption{Examples of images from five clients with each client representing one of the five datasets.}\label{fig:five}
\end{figure}

\begin{figure}[t!]
    \centering
\includegraphics[width=0.6\linewidth]{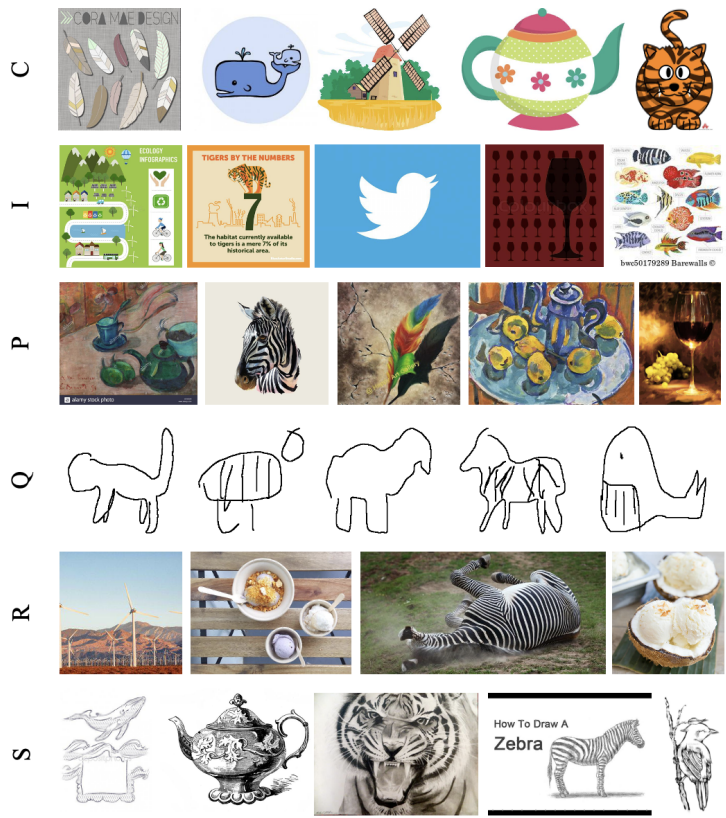}
    \caption{Examples of images from different clients with DomainNet.}\label{fig:domain}
    \vspace{-3mm}
\end{figure}

\begin{figure}[t!]
    \centering
\includegraphics[width=0.6\linewidth]{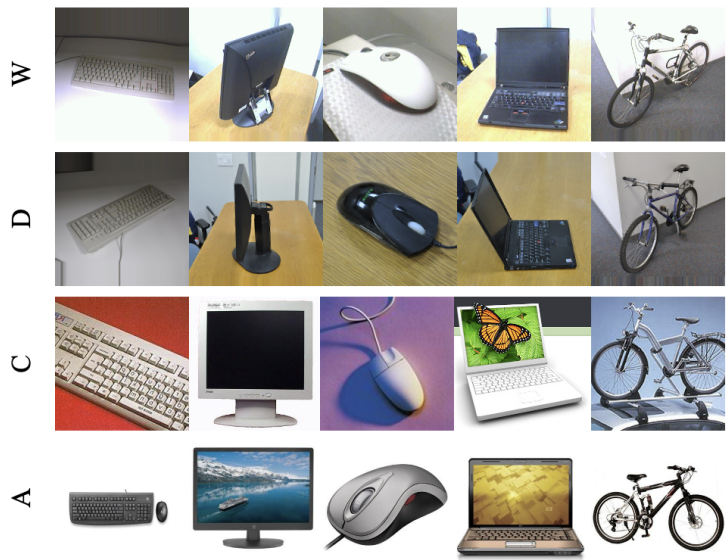}
    \caption{Examples of images from different clients with Office-Caltech10 dataset.}\label{fig:off}
    \vspace{-3mm}
\end{figure}

\end{document}